\pdfoutput=1
\documentclass[preprint,11pt,a4paper]{elsarticle}
\biboptions{numbers,sort&compress}
\usepackage{amsmath,amssymb,amsthm}
\usepackage{booktabs}
\usepackage{algorithm}
\usepackage{graphicx}
\usepackage[section]{placeins}
\usepackage{arxiv}
\usepackage{needspace}
\usepackage{xurl}

\newtheorem{theorem}{Theorem}
\newtheorem{lemma}{Lemma}
\newtheorem{corollary}{Corollary}
\theoremstyle{remark}
\newtheorem{remark}{Remark}

\newcommand{\Hk}{\mathcal{H}_k}
\newcommand{\phik}{\phi_k}
\newcommand{\x}{x}
\newcommand{\thetahat}{\widehat{\theta}}
\newcommand{\what}{\widehat{w}}
\newcommand{\ESS}{\mathrm{ESS}}

\newcommand{\appref}[1]{%
   \ref{#1}%
  }

\begin{document}

\begin{frontmatter}

\title{Density-Ratio Rescoring for Imbalanced Classification Using Raking Duals and Classifier Scores}

\author[sung]{Dongha Kim}
\author[kang]{Seunghwan Park\corref{cor}}
\ead{stat.shpark@kangwon.ac.kr}
\cortext[cor]{Corresponding author.}
\affiliation[sung]{organization={School of Mathematics, Statistics and Data Science, Sungshin Women's
  University}, addressline={2, Bomun-ro 34da-gil, Seongbuk-gu}, city={Seoul},
  postcode={02844}, country={Republic of Korea}}
\affiliation[kang]{organization={Department of Information Statistics,
  Kangwon National University}, addressline={1, Kangwondaehak-gil},
  city={Chuncheon-si, Gangwon-do}, postcode={24341},
  country={Republic of Korea}}

\begin{abstract}
Density-Ratio Rescoring (DRR) augments a classifier trained at the original
class prior with a survey-raking dual score. Raking reweights the majority
sample to match minority feature moments within a tolerance. DRR marginally
standardizes the dual and base scores and combines them with a fixed weight
of one half, using the fitted dual directly for prediction without resampling
or refitting the base classifier. Under exact population matching and a
correctly specified log-linear tilt model, the dual equals the log density
ratio up to an additive constant. A class-separation analysis characterizes
the signal strength and correlation conditions under which fusion improves
separation under common within-class covariance. On 24 tabular benchmarks,
evaluated over 30 trials and five base learners, DRR at the $D=128$
random-feature setting improves average precision over the standardized base
on every dataset, with a mean gain of $0.034$. It exceeds the shared-dual
raking-and-relabeling resampler on 22 of 24 datasets, with a mean gain of
$0.092$, and on all eight one-versus-rest tasks of a shared gene-expression
cohort. These results demonstrate the effectiveness of using raking duals as
reusable scores for improving rare-class ranking while retaining classifiers
trained at the original prior.
\end{abstract}

\begin{keyword}
imbalanced classification \sep density ratio \sep survey calibration \sep
probability calibration \sep decision-level classification
\end{keyword}

\end{frontmatter}

\section{Introduction}
Class imbalance is common in fraud detection, medical diagnosis, fault and
intrusion detection, and rare-disease genomics. When most observations belong to
the majority class, classifiers trained to minimize overall error may miss many
minority instances. Methods for addressing this problem operate at three levels.
\emph{Data-level} methods rebalance the training set through over- or
under-sampling, as in SMOTE \citep{smote}. \emph{Algorithm-level} methods modify
the learning procedure through cost-sensitive losses \citep{elkan} or
imbalance-aware ensembles \citep{balancedrf,rusboost}. \emph{Decision-level}
methods adjust the prediction rule, for example by moving a threshold or shifting
a score \citep{menon2013,menon2021}. We develop a decision-level method based on
survey calibration.

Survey calibration has so far entered imbalanced learning at the data level.
Raking-and-relabeling (R\&R) uses it to construct a rebalanced training sample
\citep{park2024}. It first assigns weights to majority
observations so that their weighted feature moments approximate those of the
minority. It then draws majority observations according to these weights,
relabels the selected observations as minority, and retrains a classifier near a
balanced prior. The same construction has been applied to one-class
classification~\citep{lee2023}. The calibration weights are an
exponential tilt of the empirical majority distribution, parameterized by a
fitted dual direction $\thetahat$.

R\&R computes $\thetahat$ only to construct sampling weights and then discards
it. The dual is itself a discriminant: the Bayes log-odds decompose as
$\log(\pi_1/\pi_0)+\log(dF_1/dF_0)(x)$, and under a log-linear tilt model the
population counterpart of the dual score $\langle\thetahat,\phik(x)\rangle$
equals the second term up to a constant (Lemma~\ref{lem:dr}). Density-Ratio Rescoring (DRR) uses this score
directly. It standardizes the dual score and adds it to the standardized score of
a base classifier trained on the original imbalanced data
(Figure~\ref{fig:schematic}). The base classifier is fitted once, and the dual is
estimated on the same fit subset; a Platt map fitted on held-out data at the
original class prior converts the fused score to a probability.

\begin{figure}[t]
\centering
\includegraphics[width=0.98\linewidth]{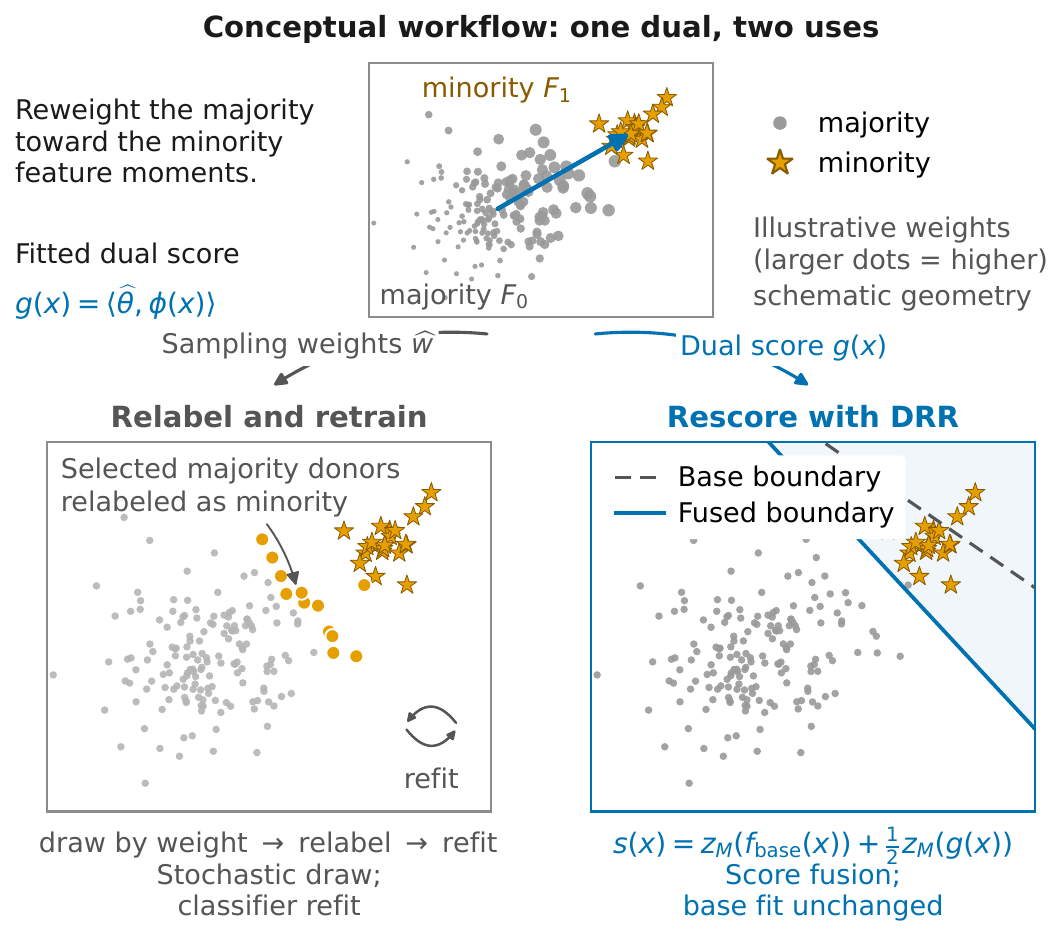}
\caption{Two uses of the fitted raking dual. Raking reweights the majority to
approximate the minority feature mean. R\&R uses the induced weights to sample
and relabel majority donors before refitting a classifier; DRR combines the dual
score with the score of an original-prior base classifier. The points, weights,
directions, and decision boundaries are schematic.}
\label{fig:schematic}
\end{figure}

The identity in Lemma~\ref{lem:dr} holds at the population level under exact
moment matching; the implemented finite-sample dual is a regularized
moment-matching score (Remark~\ref{rem:est}). The base classifier supplies a
posterior-related score learned from all labeled fit observations, so the two
constructions can have different estimation errors. Under a common
within-class covariance model, their fixed-weight combination improves class
separation when the dual supplies enough signal beyond what its correlation with
the base explains (Theorem~\ref{thm:fusion}). When marginal and within-class
scales are close, an uncorrelated dual whose standardized separation is about
one quarter of the base's can already improve separation at the deployed
weight (Remark~\ref{rem:w}).

Because DRR and R\&R start from the same fitted dual, comparing them isolates
how the dual is used: added to the score, or converted into weights for
relabeling and retraining. Relabeling shifts the training prior toward one half
and, when the weights concentrate, draws the same majority donors repeatedly;
conditional on the fitted weights, the draw adds first-stage sampling variance
(\appref{app:pps}). We also report how the DRR--R\&R gap varies with the
feasibility floor, the distance from the minority mean embedding to the majority
feature hull.

On 24 tabular datasets, DRR with the fixed weight $w=1/2$ improves average
precision over its base on every dataset at $D=128$ and exceeds the R\&R
resampler on 22; all eight one-vs-rest tasks of a shared gene-expression cohort
also favor DRR over the R\&R resampler. The dual score alone is already competitive at $D=128$, where
fusion mainly adds consistency across datasets; at $D=0$ fusion supplies most of
the gain. DRR has the highest mean average precision in the main comparator
panel (Table~\ref{tab:panel}). A single fitted dual serves all five
base-classifier families, making rescoring a practical way to use the raking
construction with existing classifiers.

\section{Related work}
A common response to class imbalance operates at the data level, rebalancing
the training set before a standard classifier is fit. SMOTE \citep{smote}
synthesizes minority points by interpolating between nearest neighbors, and a
large family of refinements has followed: Borderline-SMOTE \citep{borderline} and
ADASYN \citep{adasyn} concentrate synthesis near the decision boundary, SVM-SMOTE
\citep{svmsmote} near the support vectors, hybrids such as SMOTE-ENN
\citep{smoteenn} clean the result with edited nearest neighbors, and ROSE
\citep{rose} smooths a bootstrap. Recent variants reweight features
\citep{fwsmote} or adapt the synthetic density
\citep{zhang2020asn}. These methods generate or modify samples in feature space
and therefore rely on a meaningful neighborhood, distance, or smoothing geometry,
which can be difficult to define for categorical or high-dimensional data.
Raking-and-relabeling \citep{park2024} avoids
interpolation by reweighting the majority with survey-calibration weights and
relabeling the reweighted units, a construction also used for one-class
classification \citep{lee2023}; DRR keeps its calibration step and discards its
resampling step.

Algorithm-level methods modify the loss or the ensemble construction.
Cost-sensitive learning reweights the loss by class \citep{elkan}, and imbalance
ensembles combine undersampling with bagging or boosting, as in balanced random
forests \citep{balancedrf}, RUSBoost \citep{rusboost}, and EasyEnsemble
\citep{easyensemble}. The design of such ensembles remains an active line of work
\citep{sun2015,pang2025}. Label-distribution-aware margins instead alter
class-dependent training margins \citep{cao2019}. Decision-level methods change
neither the training data nor the fitted base classifier, but adjust its outputs
or operating rule. On calibrated scores the
balanced-accuracy-optimal threshold equals the minority prior $\pi_1$
\citep{menon2013} and the $F_1$-optimal threshold is half the optimal $F_1$
value \citep{lipton,koyejo}, while logit adjustment \citep{menon2021} shifts
logits by class-prior terms.
These post-hoc operations preserve the ranking in the binary case. DRR also
operates at the decision level, but its input-dependent dual-score term can
change the ranking. We compare it with logit adjustment in \S\ref{sec:results}.

DRR relates to probability calibration, density-ratio estimation, and score
combination. Rebalancing changes the effective training prior, so probability
estimates may require correction for the original prior. \citet{elkan} gives the
multiplicative correction, and \citet{pozzolo} analyze the posterior shift caused
by undersampling. Platt scaling \citep{platt} or isotonic regression can instead
fit a calibration map on held-out data. DRR avoids a rebalancing-induced training-prior shift:
it fits its base at the original empirical prior.

Density-ratio estimation underlies importance-weighted covariate-shift correction
\citep{kmm,kliep}. Direct estimators such as KLIEP or kernel mean matching could
also supply a second score; DRR instead reads it off the raking dual
\citep{deville}, in a feature space built from random features \citep{rahimi},
because the dual needs only the minority mean embedding, is already computed by
R\&R, and lets the rescoring-versus-relabeling comparison hold the estimator
fixed. \citet{li2023drc} construct a kernelized density-ratio classifier for
imbalanced data and combine it with threshold moving; DRR instead adds the dual
score to an original-prior base score.

DRR forms a fixed linear combination of standardized scores, the sum rule of
classifier combination, which \citet{kittler1998} show to be robust to
estimation error in the individual scores. Its analysis under a common-covariance bi-normal model
follows the classical optimal combination of two diagnostic markers
\citep{suliu}. Whereas stacking learns a combination from held-out predictions,
DRR uses a fixed weight and no meta-learner.

Studies of class imbalance also identify class \emph{overlap}, alongside the
imbalance ratio, as a determinant of minority-detection difficulty
\citep{vutti,mayabadi}. Our diagnostic is the feasibility floor
(\S\ref{sec:dual}), the distance of the minority empirical mean embedding from
the majority empirical feature hull: a sample-level analogue of overlap that also
depends on the feature map and on the sample sizes. Large-scale benchmarks provide a broader
comparison of methods. CLIMB \citep{climb}, spanning $73$ datasets and $29$
algorithms, reports that naive rebalancing often fails and that ensemble variants
can be strong. TILBench \citep{tilbench}, covering more than $40$ algorithms and
$57$ datasets, instead emphasizes regime dependence: algorithm-level methods are
generally stable, ensembles are especially competitive on small binary tasks, and
data-level methods become more competitive for multiclass or severely imbalanced
tasks. We therefore compare DRR with methods at all three levels.

\section{Method}\label{sec:method}

\subsection{Setup and the raking dual}\label{sec:dual}
We treat binary classification with a rare positive class. Write $F_1,F_0$ for the
class-conditional laws of the minority and majority, $\pi_1=1-\pi_0$ for the
minority prior, and $n_1\ll n_0$ for the two sample sizes. A base classifier
$f_{\mathrm{base}}$ is trained on the imbalanced sample and returns a score --- a
probability or a margin. Let $\phik:\mathcal{X}\to\Hk$ be the fitted finite
feature map used for raking. It always contains the linear/one-hot coordinates;
the $D=128$ variant appends 128 Gaussian random Fourier features \citep{rahimi}
when continuous variables are present, whereas $D=0$ omits that block, so that
its dual is the first-moment direction that R\&R's constraints already imply. Active blocks use
the equal-block normalization specified in \appref{app:repro}. The
minority mean embedding is $\widehat\mu_1=\tfrac1{n_1}\sum_{i:y_i=1}\phik(\x_i)$.
Raking-and-relabeling (R\&R) reweights the majority sample to match
$\widehat\mu_1$, and DRR uses the dual $\thetahat$ of that calibration. All
results are stated for this binary setup, including the eight one-vs-rest
gene-expression tasks in our evaluation.

We define the raking problem and its dual. Stack the majority embeddings as
$\Phi=[\phik(\x_i)]_{i:y_i=0}$ and let $d\in\Delta$ have strictly positive
coordinates (uniform unless noted). The calibrated weights are the
$I$-projection \citep{csiszar1975} of $d$ onto the minority-mean constraint,
\begin{equation}\label{eq:primal}
  \what=\arg\min_{w\in\Delta}\ \mathrm{KL}(w\,\|\,d)
  \quad\text{s.t.}\quad \|\Phi^\top w-\widehat\mu_1\|\le\delta ,
\end{equation}
with $\Delta$ the simplex and $\delta$ a tolerance. The constraint is feasible
only for $\delta\ge\varphi_n:=\min_{w\in\Delta}\|\Phi^\top w-\widehat\mu_1\|$, the
\emph{feasibility floor}, positive exactly when $\widehat\mu_1$ lies outside the
majority's embedding hull. The base weights are feasible once
$\delta\ge\gamma_n:=\|\Phi^\top d-\widehat\mu_1\|$, the discrepancy at the base
weights. We use $\delta_n=\max\{\varphi_n+\eta(\gamma_n-\varphi_n),\,1.05\,\varphi_n\}$
with $\eta=0.05$. The first term places the tolerance 5\% of the way from the
feasibility floor to the base discrepancy; the second imposes a 5\% buffer above
a positive floor. These fixed, dataset-relative choices keep the tolerance above
the feasibility boundary whenever $\gamma_n>0$. In computation, pairwise
Frank--Wolfe with a duality-gap stopping criterion supplies a feasible residual
upper bound for $\varphi_n$ (\appref{app:repro}).
For $\delta_n>\varphi_n$ the primal satisfies Slater's condition, strong duality
holds, and the dual below has a finite minimizer. When $\delta_n\ge\gamma_n$ the
base weights are already feasible, $\thetahat=0$, $\what=d$, and the dual
contributes nothing to the centered DRR score.

By Lagrangian duality, \eqref{eq:primal} is an exponential tilt,
\begin{equation}\label{eq:dual}
\begin{aligned}
  \what_i &\propto d_i\,e^{\langle\thetahat,\phik(\x_i)\rangle_{\Hk}},\\[4pt]
  \thetahat &=\arg\min_{\theta}\left\{
     \log\sum_{i:y_i=0} d_i\,e^{\langle\theta,\phik(\x_i)\rangle}
     -\langle\theta,\widehat\mu_1\rangle+\delta_n\|\theta\|
  \right\},
\end{aligned}
\end{equation}
the survey-calibration dual \citep{deville}. DRR uses the score defined by
$\thetahat$; Lemma~\ref{lem:dr} gives its population density-ratio
interpretation and Remark~\ref{rem:est} describes the fitted score. The
objective in \eqref{eq:dual} is convex. After an exact check for the zero
solution, Algorithm~\ref{alg:dual} minimizes it by damped Newton iteration on
the smoothed penalty $\delta_n\sqrt{\|\theta\|^2+\varepsilon_s^2}$ with
$\varepsilon_s=10^{-10}$, which changes the objective by at most
$\delta_n\varepsilon_s$. Above a few hundred feature dimensions, including all
gene-expression runs, L-BFGS replaces the Newton step; solver settings are
listed in \appref{app:repro}. Writing $K=\dim\Hk$ for the feature dimension and
$n_0$ for the number of majority points in the fit subset, a Newton step costs
$O(n_0K^2+K^3)$ for forming and factoring the weighted covariance, each L-BFGS
or pairwise Frank--Wolfe iteration $O(n_0K)$, and scoring a test point $O(K)$;
the dual is solved once per fit subset regardless of the number of base
learners.

\begin{algorithm}[t]
\caption{Computing the raking dual $\thetahat$}\label{alg:dual}
\textbf{Input:} majority embeddings $\{\phik(\x_i)\}_{i:y_i=0}$, base weights $d$,
minority mean $\widehat\mu_1$, tolerance $\delta_n$.\\
\textbf{Output:} dual direction $\thetahat$ and calibration weights $\what$.
\begin{enumerate}\setlength\itemsep{1pt}
\item Initialize $\theta\leftarrow 0$; if $\gamma_n\le\delta_n$ the base weights
      are already feasible: return $\thetahat=0$, $\what=d$.
\item \textbf{repeat}
\item \quad tilt the majority: $w_i\leftarrow d_i\,e^{\langle\theta,\phik(\x_i)\rangle}$,
      normalized so that $\sum_i w_i=1$;
\item \quad set $r_\varepsilon\leftarrow\sqrt{\|\theta\|^2+\varepsilon_s^2}$ and
      gradient $g\leftarrow\sum_i w_i\,\phik(\x_i)-\widehat\mu_1
      +\delta_n\theta/r_\varepsilon$;
\item \quad Hessian $H\leftarrow\operatorname{Cov}_w[\phik]
      +\delta_n(I/r_\varepsilon-\theta\theta^\top/r_\varepsilon^3)+\epsilon_r I$,
      the $w$-weighted embedding covariance plus the smoothed penalty curvature
      and a small ridge $\epsilon_r=10^{-10}$;
\item \quad damped Newton step
      $\theta\leftarrow\theta-\alpha_{\mathrm{bt}}H^{-1}g$, the step size
      $\alpha_{\mathrm{bt}}\in(0,1]$ chosen by backtracking on the objective;
\item \textbf{until} $\|g\|$ is below tolerance; on exit report the achieved
      discrepancy $\big\|\sum_i w_i\,\phik(\x_i)-\widehat\mu_1\big\|$ separately.
\item \textbf{return} $\thetahat\leftarrow\theta$ and $\what\leftarrow w$.
\end{enumerate}
\end{algorithm}

\begin{lemma}[Population density-ratio identity for the raking dual]\label{lem:dr}
Work at the population level: take $F_0$ itself as the reference measure, the
population minority mean $\mu_1=\mathbb{E}_{F_1}[\phik]$ as the target, and exact
matching ($\delta=0$). Assume $F_1\ll F_0$, the relevant Bochner moments and KL
divergences are finite, and the log-partition is finite and differentiable in a
neighborhood of the parameters below. Suppose the log density ratio is
log-linear in the features,
$\log\frac{dF_1}{dF_0}(\x)=\langle\theta^\star,\phik(\x)\rangle+c$ for some
$\theta^\star\in\Hk$, $c\in\mathbb{R}$, and that the exponential family is
minimal: if $\langle h,\phik(X)\rangle$ is constant $F_0$-almost surely, then
$h=0$. Then $F_1$ itself is the $I$-projection of
$F_0$ onto $\{G:\mathbb{E}_G[\phik]=\mu_1\}$, an exponential tilt with dual
$\theta^\star$, so that
\begin{equation}\label{eq:dr}
  \langle\theta^\star,\phik(\x)\rangle_{\Hk}
  \;=\; \log\frac{dF_1}{dF_0}(\x) + \mathrm{const}.
\end{equation}
Without the log-linear assumption, provided an interior member of the family
$\{dG_\theta\propto \exp\langle\theta,\phik\rangle\,dF_0\}$ matches the moment
$\mathbb{E}_{G_\theta}[\phik]=\mu_1$, that moment-matching tilt is the member
closest to $F_1$ in the divergence $\mathrm{KL}(F_1\|\cdot)$.
\end{lemma}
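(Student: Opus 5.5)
The plan is to use the Pythagorean identity for $I$-projections onto linear moment families \citep{csiszar1975}, written directly in terms of exponential tilts. For $\theta$ in the neighborhood where the log-partition $A(\theta)=\log\mathbb{E}_{F_0}e^{\langle\theta,\phik\rangle}$ is finite, define $dG_\theta=\exp\{\langle\theta,\phik\rangle-A(\theta)\}\,dF_0$.

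The first step is the key identity. For any $G\ll F_0$ with $\mathrm{KL}(G\|F_0)<\infty$ and $\mathbb{E}_G[\phik]=\mu_1$, expanding $\log\frac{dG}{dF_0}=\log\frac{dG}{dG_\theta}+\langle\theta,\phik\rangle-A(\theta)$ and integrating against $G$ gives
\[
\mathrm{KL}(G\|F_0)=\mathrm{KL}(G\|G_\theta)+\langle\theta,\mu_1\rangle-A(\theta).
\]
The Bochner-moment and finite-KL assumptions are what license the interchange of the inner product with the expectation. If moreover $\mathbb{E}_{G_\theta}[\phik]=\mu_1$, then taking $G=G_\theta$ shows that $\langle\theta,\mu_1\rangle-A(\theta)=\mathrm{KL}(G_\theta\|F_0)$. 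Hence
\[
\mathrm{KL}(G\|F_0)=\mathrm{KL}(G\|G_\theta)+\mathrm{KL}(G_\theta\|F_0)\ge \mathrm{KL}(G_\theta\|F_0),
\]
with equality iff $G=G_\theta$. So a moment-matching tilt is the unique $I$-projection of $F_0$ onto $\{G:\mathbb{E}_G[\phik]=\mu_1\}$.

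The second step applies this under the log-linear assumption. Here $dF_1/dF_0=e^{\langle\theta^\star,\phik\rangle+c}$. Normalization forces $c=-A(\theta^\star)$, so $F_1=G_{\theta^\star}$, and $F_1$ trivially matches $\mu_1=\mathbb{E}_{F_1}[\phik]$. By the first step, $F_1$ is the $I$-projection, with dual $\theta^\star$, which gives \eqref{eq:dr} with constant $-c$.

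It remains to show that the dual parameter is well defined, i.e.\ that no other $\theta$ yields the same projection. This is where minimality enters, and I expect it to be the main (though mild) obstacle. Suppose $G_\theta=G_{\theta^\star}$. Then $\langle\theta-\theta^\star,\phik\rangle$ equals $A(\theta)-A(\theta^\star)$, a constant, $F_0$-a.s., so $\theta=\theta^\star$. An equivalent route is strict convexity of $A$: its Hessian is $\operatorname{Cov}_{G_\theta}[\phik]$, which is positive definite by minimality since $G_\theta\sim F_0$. The dual objective $A(\theta)-\langle\theta,\mu_1\rangle$ is then strictly convex, and its stationarity condition $\nabla A(\theta)=\mu_1$ (using differentiability of $A$) has the unique root $\theta^\star$. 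This also identifies $\theta^\star$ with the $\delta=0$ population version of \eqref{eq:dual}.

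For the misspecified clause, suppose an interior $\tilde\theta$ satisfies $\mathbb{E}_{G_{\tilde\theta}}[\phik]=\mu_1$. Apply the same expansion with $F_1$ in place of $G$ against an arbitrary family member $G_\theta$:
\[
\mathrm{KL}(F_1\|G_\theta)=\mathrm{KL}(F_1\|F_0)-\langle\theta,\mu_1\rangle+A(\theta),
\]
which uses only $F_1$'s first moment. The right side is the convex dual objective, whose gradient $\nabla A(\theta)-\mu_1$ vanishes at $\tilde\theta$. Therefore $\tilde\theta$ minimizes $\mathrm{KL}(F_1\|G_\theta)$ over the family. Equivalently, again by the Pythagorean identity, $\mathrm{KL}(F_1\|G_\theta)=\mathrm{KL}(F_1\|G_{\tilde\theta})+\mathrm{KL}(G_{\tilde\theta}\|G_\theta)$, so $G_{\tilde\theta}$ is the closest member.
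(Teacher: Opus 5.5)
Your proposal is correct and takes essentially the paper's route. Under the log-linear model you use the Pythagorean identity of $I$-projection with $F_1=G_{\theta^\star}$. You derive it for a general moment-matching tilt before specializing, and you spell out the minimality-based identifiability step that the paper states in one clause. For the misspecified clause you use the same decomposition as the paper, $\mathrm{KL}(F_1\|G_\theta)=\mathrm{KL}(F_1\|F_0)-[\langle\theta,\mu_1\rangle-A(\theta)]$, together with convexity and the first-order moment-matching condition. Your extra three-point identity $\mathrm{KL}(F_1\|G_\theta)=\mathrm{KL}(F_1\|G_{\tilde\theta})+\mathrm{KL}(G_{\tilde\theta}\|G_\theta)$ is a small bonus: it gives both the minimization and its uniqueness without any appeal to differentiability.
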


\begin{proof}
If $dF_1=\exp(\langle\theta^\star,\phik\rangle+c)\,dF_0$ then $F_1$ belongs to
the tilt family and satisfies $\mathbb{E}_{F_1}[\phik]=\mu_1$. For any $G$ with
$\mathbb{E}_G[\phik]=\mu_1$, expanding the log-ratio gives the Pythagorean
identity of $I$-projection \citep{csiszar1975}
$\mathrm{KL}(G\|F_0)=\mathrm{KL}(G\|F_1)+\langle\theta^\star,\mu_1\rangle+c
=\mathrm{KL}(G\|F_1)+\mathrm{KL}(F_1\|F_0)$,
so $F_1$ is the unique $I$-projection; minimality makes its dual parameter
$\theta^\star$ identifiable, which is
\eqref{eq:dr}. Without the log-linear assumption,
$\mathrm{KL}(F_1\|G_\theta)=\mathrm{KL}(F_1\|F_0)
-[\langle\theta,\mu_1\rangle-\Lambda(\theta)]$ with $\Lambda$ the log-partition
of the tilt, so minimizing it is the concave problem
$\max_\theta\{\langle\theta,\mu_1\rangle-\Lambda(\theta)\}$, whose first-order
condition is the moment match $\mathbb{E}_{G_\theta}[\phik]=\mu_1$; the assumed
moment-matching member attains it. If $\mu_1$ lies outside the closed convex
hull of the support of $\phik(X)$ under $F_0$, the moment constraint is
infeasible. On its boundary the constraint may
be feasible while no finite natural parameter attains the limiting tilt; these
are distinct from the regular interior case and motivate the finite-sample
tolerance of \S\ref{sec:dual}.
\end{proof}

With a full one-hot coding, indicators within a categorical variable sum to a
constant, so parameter minimality is understood after dropping one reference
level or, equivalently, on the quotient by constant-valued directions. This does
not change the normalized tilt weights, and score standardization removes the
corresponding additive constant.

\begin{remark}[From the population identity to the deployed dual]\label{rem:est}
The deployed $\thetahat$ of \eqref{eq:dual} uses the empirical majority law,
the empirical minority mean $\widehat\mu_1$, and a positive tolerance
$\delta_n$. Lemma~\ref{lem:dr} describes exact population matching under the
stated model. With fixed $\eta=0.05$, the tolerance need not tend to zero as
sample sizes grow, so the lemma does not establish consistency of the deployed
dual for $\theta^\star$. We make no sampling-rate claim for this fitted-map,
positive-tolerance estimator. Random features approximate the chosen kernel
\citep{rahimi}; a rate for approximating a particular log-linear function would
require additional assumptions on that function. The lemma is stated in the
finite map $\phik$ itself. When $\widehat\mu_1$ lies outside the majority embedding hull
($\varphi_n>0$), exact matching is infeasible. A nonzero solution of the
exact-norm dual points along the residual from the fitted weighted majority
mean to $\widehat\mu_1$. This geometric interpretation does not establish a
density-ratio identity; a zero dual has no normalized direction. For a
characteristic feature map, equality of $\widehat\mu_1$ to a weighted majority
embedding would require equality of the corresponding empirical measures, so two
independent samples from the same continuous law can still have $\varphi_n>0$;
the floor is a sample- and representation-dependent quantity.
\end{remark}

\noindent In the first-moment (linear) case the population dual coincides with
Owen's infinitely-imbalanced logistic-regression coefficient \citep{owen}; for a
Gaussian majority law that coefficient is the mean-difference discriminant
$\Sigma^{-1}(\mu_1-\mu_0)$, which is also the log-density-ratio direction when
both class-conditional laws are Gaussian with common covariance $\Sigma$.

The original R\&R method uses $\thetahat$ for relabeling. It converts
$\thetahat$ into the weights $\what$, draws $m$ majority units by multinomial
(with-replacement) probability-proportional-to-size sampling, relabels them
minority, and \emph{retrains} a classifier on the balanced set. R\&R prescribes
capping the draw at $m\le\kappa_{\ESS}\,\ESS(\what)$, where
$\ESS(\what)=1/\sum_i\what_i^2$ is the Kish effective sample size of the
normalized weights and $\kappa_{\ESS}$ is a constant of order one. Throughout,
the \emph{R\&R resampler} denotes the comparator built from the same fitted
dual, learner, and split as DRR; it draws $m=\lfloor n_{\mathrm{fit}}/4\rfloor$
and records, but does not enforce, the cap. A separate $D=0$ diagnostic with
SVM and random-forest bases finds lower AP after capping, under its own
draw-size convention (\appref{app:repro}).
Concentrated weights lower $\ESS(\what)$, and an uncapped draw then reuses
donors. Because the retrained classifier is fit near an effective prior of
$\tfrac12$, its probabilities require prior correction or recalibration at the
original prior.

Relabeling thus adds a sampling stage that rescoring does not have. Conditional
on the data and the fitted weights, the relabeled donors form a with-replacement
PPS draw whose mean embedding $\widetilde\mu_1$ has conditional mean
$\Phi^\top\what$ and conditional covariance $\Sigma_{\what}/m$, the
$\what$-weighted majority covariance divided by the draw size; under exact
matching the law of total variance gives
$\operatorname{Var}(\widetilde\mu_1)=\Sigma_1/n_1+\mathbb{E}(\Sigma_{\what}/m)$,
the plug-in variance plus a positive-semidefinite term (\appref{app:pps}). This
first-stage identity does not determine the accuracy of the retrained
classifier; that comparison is empirical (\S\ref{sec:results}).

\subsection{The DRR score}\label{sec:proposal}
DRR uses $\thetahat$ as a second term in the decision score, not as a
resampling weight. With a base classifier $f_{\mathrm{base}}$ trained on the
\emph{original imbalanced} data, the DRR score is
\begin{equation}\label{eq:drr}
  s(\x) \;=\; z_M\big(f_{\mathrm{base}}(\x)\big)
            \;+\; w\cdot z_M\big(\langle\thetahat,\phik(\x)\rangle_{\Hk}\big),
  \qquad w=\tfrac12,
\end{equation}
where $z_M(u)=(u-\widehat m_u)/\widehat\sigma_u$ uses the marginal mean and
standard deviation fitted on a held-out threshold subset (\S\ref{sec:proto}).
If the threshold-subset scale satisfies $\widehat\sigma_u\le10^{-12}$, the
implementation sets the divisor to one to avoid division by a near-zero scale.
An identically constant score contributes zero after centering. DRR adds the
standardized dual score without resampling or refitting the base classifier.
The fixed weight $w=1/2$ gives the dual half the base's coefficient in marginal
standard-deviation units; it was fixed before the experiments and is not tuned
(sensitivity analysis in \S\ref{sec:results}).
 Figure~\ref{fig:illust} illustrates
score fusion using a simple mean-contrast direction.
Algorithm~\ref{alg:drr} specifies the full procedure. The standardization here
is by the \emph{marginal} variance; Remark~\ref{rem:w} relates it to the
within-class standardization used in the analysis.

\begin{figure}[!htbp]
\centering
\includegraphics[width=0.96\linewidth]{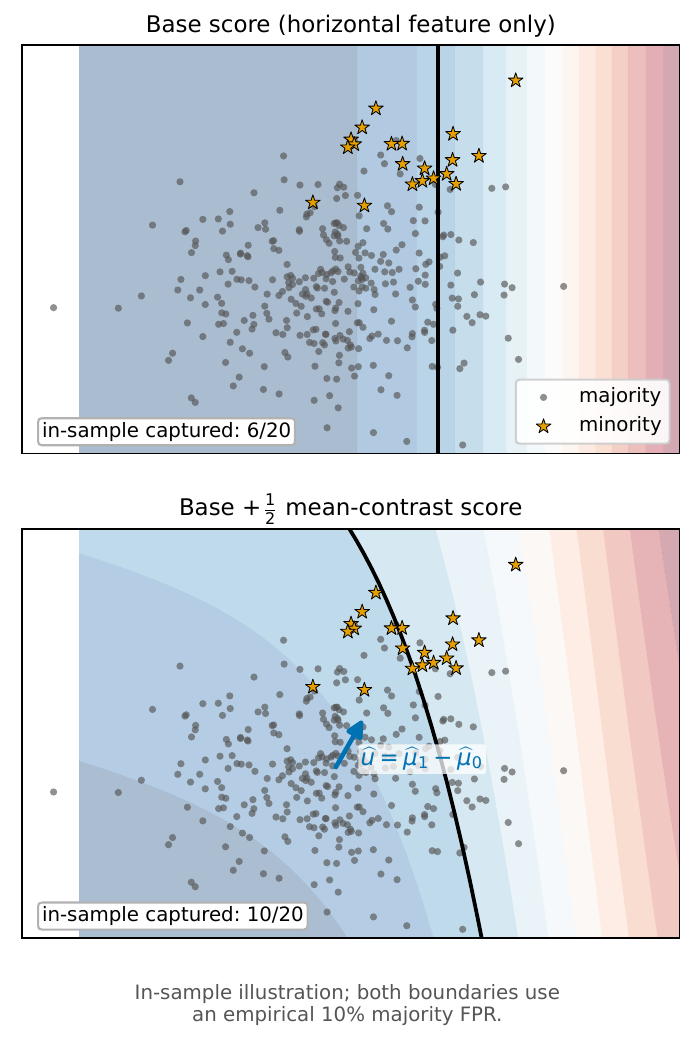}
\caption{Conceptual example of mean-contrast score fusion. The base classifier
uses only the horizontal coordinate. Adding half the standardized score
$\widehat u^\top x$, with $\widehat u=\widehat\mu_1-\widehat\mu_0$, tilts the boundary toward the
vertical minority signal. At an empirical 10\% false-positive rate, the base
captures 6 of 20 minority points and the fused score captures 10. Counts are
in-sample, and the direction is a sample mean contrast used in place of a fitted
raking dual.}
\label{fig:illust}
\end{figure}

\begin{algorithm}[t]
\caption{Density-Ratio Rescoring (DRR)}\label{alg:drr}
\begin{enumerate}\setlength\itemsep{1pt}
\item Split the training half into a fit and a threshold subset (\S\ref{sec:proto}).
\item Fit the base classifier $f_{\mathrm{base}}$ on the fit subset --- at the
      original empirical prior, with nothing rebalanced.
\item On the fit subset solve the raking dual $\thetahat$ of \eqref{eq:dual} at
      tolerance $\delta_n$ (\S\ref{sec:dual}); the dual score is
      $g(\x)=\langle\thetahat,\phik(\x)\rangle$.
\item On the threshold subset fit the marginal standardizations $z_M(\cdot)$ of
      $f_{\mathrm{base}}$ and $g$, and form the DRR score
      $s(\x)=z_M(f_{\mathrm{base}}(\x))+\tfrac12\,z_M(g(\x))$ \eqref{eq:drr}.
\item Fit a Platt map $p_{\mathrm{DRR}}(s)$ and tune the decision threshold on the
      threshold subset.
\item On the test half report $s$ for ranking and $p_{\mathrm{DRR}}(s)$ for
      probabilities and thresholded predictions.
\end{enumerate}
\end{algorithm}

\subsection{Analysis of score fusion}
The two scores provide related discriminative information. The Bayes
log-odds decompose as
$\log\frac{P(y=1\mid\x)}{P(y=0\mid\x)}
   =\log\frac{\pi_1}{\pi_0}+\log\frac{dF_1}{dF_0}(\x)$;
the base classifier supplies a posterior-related probability or margin score
fitted to all labeled points, whereas the dual supplies a moment-matching score
whose density-ratio interpretation follows from Lemma~\ref{lem:dr} and whose
only minority input is $\widehat\mu_1$. Because the two scores use different model classes and
different reductions of the data, their within-class errors need not be fully
correlated. Theorem~\ref{thm:fusion} makes the consequence precise: it expresses
the change in class separation in terms of the two individual signals and their
within-class correlation $\rho$. Throughout, $s_1$ and $s_2$ are treated as fixed functions --- the
analysis is conditional on the training data that produced $f_{\mathrm{base}}$
and $\thetahat$ --- and all moments are over the test law. The algebra is the
classical linear combination of two diagnostic markers \citep{suliu};
Remark~\ref{rem:w} specializes it to the fixed DRR weight.

\begin{theorem}[Fusion improvement]\label{thm:fusion}
Let $s_1=f_{\mathrm{base}}(\x)$ and $s_2=\langle\thetahat,\phik(\x)\rangle$ have
finite within-class second moments. Assume, for $j=1,2$,
$\operatorname{Var}(s_j\mid Y=0)=\operatorname{Var}(s_j\mid Y=1)=\sigma_j^2>0$
and
$\operatorname{Cov}(s_1,s_2\mid Y=0)=\operatorname{Cov}(s_1,s_2\mid Y=1)
=\rho\sigma_1\sigma_2$, where $-1<\rho\le1$. Assume each score, in the sign in
which it is deployed, points toward the minority class on the test law,
$d_j=\{\mathbb{E}(s_j\mid Y=1)-\mathbb{E}(s_j\mid Y=0)\}/\sigma_j\ge0$.
This orientation on the test law is an assumption for both fitted scores.
Call $(d_1,d_2)\ne(0,0)$ the nondegenerate case.
For $z_W(s_j)=(s_j-a_j)/\sigma_j$, with each $a_j$ common to the two classes,
the score $s=z_W(s_1)+wz_W(s_2)$, $w\ge0$, has deflection
\begin{equation}\label{eq:dprime}
  d'(w)\;=\;\frac{d_1+w\,d_2}{\sqrt{1+w^2+2w\rho}},
\end{equation}
and hence
\begin{enumerate}
\item[\textup{(a)}] for every $w>0$: $d'(w)>d_1$ iff
      $w(d_1^2-d_2^2)<2d_1(d_2-\rho d_1)$; when $d_1=d_2>0$ this holds for every
      $w>0$ exactly when $\rho<1$;
\item[\textup{(b)}] let $-1<\rho<1$. If $(d_1,d_2)=(0,0)$, then $d'(w)=0$ for
      every $w\ge0$. Otherwise the maximization over $[0,\infty)$ has three
      cases: if $d_2\le\rho d_1$, the unique maximizer is $w_+^\star=0$; if
      $d_2>\rho d_1$ and $d_1>\rho d_2$, the unique maximizer is
      \[
        w_+^\star=\frac{d_2-\rho d_1}{d_1-\rho d_2}>0,\qquad
        d'(w_+^\star)^2=\frac{d_1^2-2\rho d_1d_2+d_2^2}{1-\rho^2};
      \]
      if $d_1\le\rho d_2$, then $d'$ is strictly increasing on
      $[0,\infty)$ and has no finite maximizer, with
      $\sup_{w\ge0}d'(w)=d_2$ as $w\to\infty$;
\item[\textup{(c)}] if the second score has zero separation and is uncorrelated
      with the first ($d_2=0$, $\rho=0$), then
      $d'(w)=d_1/\sqrt{1+w^2}$, strictly below $d_1$ for every $w>0$ when
      $d_1>0$.
\end{enumerate}
Parts \textup{(a)--(c)} are second-moment identities and need no distributional
assumption. If in addition the scores are bi-normal (jointly Gaussian within
each class), then $\mathrm{AUC}=\Phi_{\mathrm N}(d'/\sqrt2)$, with
$\Phi_{\mathrm N}$ the standard normal distribution function, is increasing in
$d'$, so \textup{(a)--(c)} transfer to AUC.
\end{theorem}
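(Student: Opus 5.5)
The plan is to first compute the deflection formula \eqref{eq:dprime} directly from second moments, and then read off (a)--(c) by elementary calculus.

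\textbf{Deriving \eqref{eq:dprime}.} Since each $a_j$ is common to both classes, the class-mean difference of $s=z_W(s_1)+wz_W(s_2)$ is $d_1+wd_2$. Under the equal within-class covariance assumption, the within-class variance of $s$ is the same in both classes and equals $1+w^2+2w\rho$. This quantity is positive for $w\ge0$ whenever $\rho>-1$: it equals $(1-w)^2+2w(1+\rho)$, which is positive for $w\ge0$. Dividing the mean difference by the within-class standard deviation gives $d'(w)$.

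\textbf{Part (a).} For $w>0$, compare $d'(w)$ with $d_1$.
\begin{itemize}
\item If $d_1=0$, then $d'(w)>0$ iff $d_2>0$. Since $d_1=0$ and $d_2\ge0$, the stated criterion reduces to $-wd_2^2<0$, which is the same condition.
\item If $d_1>0$, both sides are nonnegative, so I square them and compare $(d_1+wd_2)^2$ with $d_1^2(1+w^2+2w\rho)$. Cancelling $d_1^2$ and dividing by $w>0$ gives $2d_1d_2+wd_2^2>wd_1^2+2\rho d_1^2$. This rearranges to $w(d_1^2-d_2^2)<2d_1(d_2-\rho d_1)$.
\end{itemize}
When $d_1=d_2>0$ the left side vanishes, and the condition becomes $0<2d_1^2(1-\rho)$, that is, $\rho<1$.

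\textbf{Part (b).} For $|\rho|<1$ I will differentiate $\log d'$, or equivalently compute the numerator of $\frac{d}{dw}d'$. A short computation gives
\[
\frac{d}{dw}\,d'(w)\;\propto\; d_2(1+w^2+2w\rho)-(d_1+wd_2)(w+\rho)=(d_2-\rho d_1)-w(d_1-\rho d_2),
\]
with a positive proportionality factor $(1+w^2+2w\rho)^{-3/2}$. The sign of the derivative is therefore that of an affine function of $w$, and I split into cases.
\begin{itemize}
\item If $d_2\le\rho d_1$: for $d_1>0$, this together with $d_2\ge0$ forces $\rho\ge0$, so $d_1-\rho d_2\ge d_1(1-\rho^2)>0$. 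The affine function is then $\le0$ at $w=0$ and strictly decreasing, so the derivative is negative for $w>0$ and the unique maximizer is $0$. (If $d_1=0$ then $d_2=0$, which is the degenerate case.)
\item If $d_2>\rho d_1$ and $d_1>\rho d_2$: the affine function changes sign exactly once, from positive to negative, at $w_+^\star$. Substituting $w_+^\star$ should simplify to $(d_1^2-2\rho d_1d_2+d_2^2)/(1-\rho^2)$. To do this I compute $d_1+w_+^\star d_2=(d_1^2-d_2^2\cdot\rho\cdot0\ldots)$ carefully, i.e. I write everything over the common denominator $(d_1-\rho d_2)^2$ and show that the numerator factors as $(1-\rho^2)$ times the claimed quadratic form.
\item If $d_1\le\rho d_2$: then necessarily $d_2>\rho d_1$, because $d_2\le\rho d_1\le\rho^2d_2$ would force $d_2=0=d_1$. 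The slope $-(d_1-\rho d_2)$ is then $\ge0$, so the derivative is positive everywhere and $d'$ is strictly increasing. The limit of $d'(w)$ as $w\to\infty$ is $d_2$.
\end{itemize}
The degenerate case $(d_1,d_2)=(0,0)$ is immediate, since the numerator of $d'$ is zero.

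\textbf{Part (c) and the AUC transfer.} Part (c) is direct substitution of $d_2=0$, $\rho=0$ into \eqref{eq:dprime}. For the AUC statement, under bi-normality $s\mid Y=1$ and $s\mid Y=0$ are Gaussian with common variance. Hence $s_{(1)}-s_{(0)}$, formed from independent draws, is normal with mean $d'$ and variance $2$ in within-class units, so $\mathrm{AUC}=P(s_{(1)}>s_{(0)})=\Phi_{\mathrm N}(d'/\sqrt2)$. Monotonicity of $\Phi_{\mathrm N}$ then transfers (a)--(c) to AUC.

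\textbf{Main obstacle.} I expect the main obstacle to be the careful case bookkeeping in (b): checking that the three cases are exhaustive and mutually exclusive under $d_j\ge0$ and $|\rho|<1$, and verifying the closed-form value at $w_+^\star$. The latter I would confirm by expanding $(d_1(d_1-\rho d_2)+d_2(d_2-\rho d_1))^2$ against the denominator.
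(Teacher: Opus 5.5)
Your proposal is correct and follows essentially the same route as the paper's proof. The paper likewise gets the deflection from second moments, squares for (a), reads (b) off the sign of the affine numerator $A-wB$ with $A=d_2-\rho d_1$, $B=d_1-\rho d_2$ in the same three cases, substitutes directly for (c), and uses the equal-variance bi-normal AUC formula.

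Two small touch-ups are needed. First, your chain $d_2\le\rho d_1\le\rho^2 d_2$ in the case $d_1\le\rho d_2$ tacitly uses $\rho\ge0$; for $\rho<0$ that case already forces $d_1=d_2=0$. Second, at $w_+^\star=A/B$ it is the variance term that factors, $A^2+B^2+2\rho AB=(1-\rho^2)(d_1^2-2\rho d_1d_2+d_2^2)$, while $d_1B+d_2A$ equals that quadratic form directly, so the stated value follows at once.
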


\begin{proof}
Centering is immaterial because each $a_j$ is common to both classes; dividing by
the common within-class standard deviation gives unit conditional variances and
mean separations $d_j$. The mean separation of $s=z_W(s_1)+w z_W(s_2)$ is
$d_1+w d_2$ and its within-class variance is
$\operatorname{Var}(z_W(s_1)+w z_W(s_2)\mid y)=1+w^2+2w\rho$, giving
\eqref{eq:dprime}.
For (a), both $d'(w)$ and $d_1$ are nonnegative under the orientation, so
$d'(w)>d_1$ iff $(d_1+wd_2)^2>d_1^2(1+w^2+2w\rho)$; cancelling $d_1^2$
and dividing by $w>0$ leaves $2d_1(d_2-\rho d_1)>w(d_1^2-d_2^2)$, and at
$d_1=d_2>0$ the right side is $0$ while the left is $2d_1^2(1-\rho)$. For (b),
differentiating \eqref{eq:dprime} gives
\[
 \frac{\partial d'(w)}{\partial w}=
 \frac{(d_2-\rho d_1)-w(d_1-\rho d_2)}
 {(1+w^2+2\rho w)^{3/2}}.
\]
Write $A=d_2-\rho d_1$ and $B=d_1-\rho d_2$. If $A\le0$, then $d_2\le\rho d_1$
with $(d_1,d_2)\ne(0,0)$ gives $d_1>0$, hence $\rho\ge0$ and
$B=d_1-\rho d_2\ge d_1(1-\rho^2)>0$, so the
derivative is nonpositive at zero and strictly negative for $w>0$. If $A>0$ and
$B>0$, it changes sign once at $A/B$. If $B\le0$, then $d_1\le\rho d_2$ with
$(d_1,d_2)\ne(0,0)$ gives $d_2>0$, hence $\rho\ge0$ and $A\ge d_2(1-\rho^2)>0$,
so the derivative is strictly positive for every $w\ge0$. Substitution at $A/B$ gives the displayed value. Part (c) is
\eqref{eq:dprime} at $d_2=\rho=0$. None of this uses Gaussianity. The AUC
identity is the standard equal-variance bi-normal formula, invoked only for the
last claim.
\end{proof}

Average precision and $F_1$ are important in imbalanced classification, but a gain
in $d'$ or AUC need not improve precision at a given recall. Under the
equal-variance bi-normal model, the following corollary links separation to ROC
dominance and to population AUPRC, the continuous-score counterpart of the
empirical average precision of \S\ref{sec:metrics}.

\begin{corollary}[ROC dominance, precision, and AUPRC]\label{cor:ap}
Under the equal-variance bi-normal model an ROC depends on the score only
through its deflection,
$\mathrm{TPR}=\Phi_{\mathrm N}\!\big(\Phi_{\mathrm N}^{-1}(\mathrm{FPR})+d'\big)$.
Hence whenever $d'(w)>d'(0)$ the DRR score's ROC dominates the base's at every
interior false-positive rate (the curves coincide at the endpoints). At a fixed
minority prior $0<\pi_1<1$, precision
$=\pi_1\mathrm{TPR}/\big(\pi_1\mathrm{TPR}+(1-\pi_1)\mathrm{FPR}\big)$ is
increasing in $\mathrm{TPR}$ and decreasing in $\mathrm{FPR}$, so raising the ROC
raises precision at every interior recall. Integrating, both population AUPRC
and AUC strictly increase.
\end{corollary}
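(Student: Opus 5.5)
The plan is to derive the ROC formula first, then obtain ROC dominance, then precision at fixed recall, and finally integrate. Under the equal-variance bi-normal model, the score $s$ (for the base, $w=0$; for DRR, general $w$) is, after the within-class standardization $z_W$, distributed as $N(m_0,1)$ given $Y=0$ and $N(m_0+d',1)$ given $Y=1$. The standardization is a monotone affine map, so it changes neither the ROC nor the precision--recall curve. For a threshold $t$ we have $\mathrm{FPR}=1-\Phi_{\mathrm N}(t-m_0)$ and $\mathrm{TPR}=1-\Phi_{\mathrm N}(t-m_0-d')$. Eliminating $t$ through $\Phi_{\mathrm N}^{-1}(1-u)=-\Phi_{\mathrm N}^{-1}(u)$ gives $\mathrm{TPR}=\Phi_{\mathrm N}(\Phi_{\mathrm N}^{-1}(\mathrm{FPR})+d')$. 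Since $\Phi_{\mathrm N}$ is strictly increasing, for every $\mathrm{FPR}\in(0,1)$ this TPR is strictly increasing in $d'$. When $d'(w)>d'(0)$, with $d'(w)$ taken from \eqref{eq:dprime} of Theorem~\ref{thm:fusion}, the DRR curve therefore lies strictly above the base curve at every interior FPR. At the endpoints $\mathrm{FPR}\in\{0,1\}$ both curves equal $0$ or $1$.

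For precision, I will verify monotonicity by writing precision as $1/\bigl(1+\tfrac{1-\pi_1}{\pi_1}\,\mathrm{FPR}/\mathrm{TPR}\bigr)$, which is increasing in TPR and decreasing in FPR. The key step is to reparametrize by recall $r=\mathrm{TPR}\in(0,1)$. Inverting the ROC gives $\mathrm{FPR}(r)=\Phi_{\mathrm N}(\Phi_{\mathrm N}^{-1}(r)-d')$, which is strictly decreasing in $d'$. Consequently, at each fixed interior recall the FPR is strictly smaller under DRR while the TPR is the same, so the precision is strictly larger. I expect this to be the point needing the most care. ``Raising the ROC raises precision at every interior recall'' has to be read at matched recall, not at matched FPR. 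Matching recall is what makes the comparison clean: only one argument of the precision map moves.

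Finally, I will integrate. Population AUPRC is $\int_0^1 \mathrm{Prec}(r)\,dr$. The integrand is continuous in $r$ on $(0,1)$ and pointwise strictly larger for DRR, so the integral strictly increases; boundedness by $1$ makes the integral well defined. For AUC, $\int_0^1 \mathrm{TPR}(u)\,du$ likewise strictly increases by pointwise strict dominance and continuity. This is consistent with $\mathrm{AUC}=\Phi_{\mathrm N}(d'/\sqrt2)$ from Theorem~\ref{thm:fusion}, which already gives the AUC claim directly.
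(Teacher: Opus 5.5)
Your proposal is correct and follows the paper's route: the bi-normal ROC formula gives pointwise dominance, precision is compared at matched recall (the paper's ``evaluated at the FPR the dominant ROC attains there''), and the results are integrated over recall for AUPRC and over FPR for AUC. Your explicit inverse $\mathrm{FPR}(r)=\Phi_{\mathrm N}(\Phi_{\mathrm N}^{-1}(r)-d')$ improves on the paper's ``no larger'' wording by making the precision gain strict. One small point: the fused score still has within-class variance $1+w^2+2w\rho$ after $z_W$, so it needs a further rescaling to have unit variance, but your remark that positive affine maps leave the ROC unchanged already covers this.
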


\begin{proof}
The displayed ROC is the equal-variance bi-normal form. Fix $\mathrm{FPR}$; a
larger $d'$ raises $\mathrm{TPR}$, i.e.\ the ROC moves up pointwise. Precision at
recall $r$ is precision at $\mathrm{TPR}=r$, evaluated at the $\mathrm{FPR}$ the
dominant ROC attains there, which is no larger; monotonicity of precision then
gives the pointwise precision gain, and integrating precision over recall gives
population AUPRC, while integrating the ROC over FPR gives AUC.
\end{proof}

\begin{remark}[Scope of the bi-normal assumption]\label{rem:df}
The moment-based separation result also yields a distribution-free error bound.
By Cantelli's inequality a threshold midway between the class
means misclassifies each class with probability at most $1/(1+(d'/2)^2)$ for
\emph{any} score law with the common within-class variance, so raising $d'$
(Theorem~\ref{thm:fusion}) tightens this bound. Turning a
larger $d'$ into a larger AUC and AUPRC is what requires the bi-normal model
(Corollary~\ref{cor:ap}): outside it, a rise in $d'$ can in principle coexist
with a fall in AUC, since the deflection sees only two moments.
\end{remark}

\begin{remark}[Fixed fusion weight $w=\tfrac12$]\label{rem:w}
The optimal weight in Theorem~\ref{thm:fusion} depends on $d_1,d_2,\rho$; DRR
fixes $w=\tfrac12$. In the within-class standardization of
Theorem~\ref{thm:fusion}(a), for $d_1>0$ the fixed weight beats the base
($d'(\tfrac12)>d_1$) exactly when
\[
  \frac{d_2}{d_1}\;>\;\sqrt{5+4\rho}-2,
\]
about $0.24$ at $\rho=0$ (from Theorem~\ref{thm:fusion}(a) at $w=\tfrac12$;
\appref{app:theorycurve} plots $d'(w)$). The deployed score standardizes by the
marginal, not the within-class, scale. For a score with deflection $d_j$
the two scales differ by the factor $\sqrt{1+\pi_0\pi_1d_j^2}$, so the deployed
weight corresponds to the effective within-class weight
$w_{\mathrm{eff}}=\tfrac12\sqrt{(1+\pi_0\pi_1 d_1^2)/(1+\pi_0\pi_1 d_2^2)}$,
which tends to $\tfrac12$ as $\pi_1\to0$; the $0.24$ threshold is that limit,
and the exact marginal condition is given in \appref{app:theorycurve}. In
practice the separating direction of Remark~\ref{rem:est} must therefore carry
signal beyond what its correlation with the base explains ($d_2>\rho d_1$);
whether it does is an empirical question. \S\ref{sec:results} reports the
weight sensitivity, and \appref{app:sensitivity} estimates $(d_1,d_2,\rho)$
per dataset.
\end{remark}

\section{Experimental setup}\label{sec:proto}

\subsection{Evaluated methods}
All methods use the same fit subset and are evaluated once on the test half.
\begin{center}
\renewcommand{\baselinestretch}{1}\normalsize
\begin{tabular}{@{}>{\raggedright\arraybackslash}p{0.27\linewidth}>{\raggedright\arraybackslash}p{0.63\linewidth}@{}}
\toprule
Method & Description\\
\midrule
base (\texttt{none}) & classifier on the imbalanced fit set; threshold tuned\\
cost-sensitive & class-weighted (\texttt{balanced}) base classifier \citep{elkan}\\
logit adjustment & prior-shifted decision score \citep{menon2021}\\
SMOTE family & \texttt{SMOTE}, \texttt{BSMOTE}, \texttt{ADASYN},\\
 & \texttt{ROS}, \texttt{ROSE}, \texttt{RUS}, \texttt{SMOTE-ENN},\\
 & \texttt{SVM-SMOTE}, \texttt{FW-SMOTE}\\
ensembles & balanced RF \citep{balancedrf}; RUSBoost \citep{rusboost}\\
boosting & histogram gradient boosting (HistGB) \citep{friedman2001}\\
R\&R resampler & R\&R-resample$(D)$: relabeled balanced sample $\to$ classifier, $D\in\{0,128\}$\\
dual score alone & $z_M(g)$ with its own Platt map, $D\in\{0,128\}$; independent of the base learner\\
same-map controls & logistic score on $\phik$, alone and fused with the base; raw-ranking evaluation\\
\textbf{DRR} & score \eqref{eq:drr} with direction \texttt{DRR($D{=}0$)} or \texttt{DRR($D{=}128$)}\\
\bottomrule
\end{tabular}
\end{center}

We compare DRR with data-level, algorithm-level, and decision-level methods.
At the data level we run nine resamplers (the SMOTE family, random over- and
undersampling, and ROSE) and report
three representatives in the panel tables --- the classical SMOTE, the cleaning
hybrid SMOTE-ENN, and a recent feature-weighted refinement, FW-SMOTE
\citep{fwsmote}; the remaining six are contained in the released raw results.
FW-SMOTE follows the published rank-based feature weighting and sampling
algorithm at a fixed basic-RIM setting, with at most three synthetic observations per
minority observation (details in \appref{app:repro}). The other pure over- and
undersampling arms target a training prior of $\tfrac12$; a subsequent cleaning
step such as ENN can shift the realized ratio. At the
algorithm level we include cost-sensitive learning (which reweights the loss
instead of the sample), the imbalance ensembles, and, as a modern nonlinear
learner, an unweighted histogram gradient-boosting model. At the decision level
we include threshold tuning and logit adjustment.

\subsection{Protocol}
For the main benchmark panel, we evaluate each method under the same three
cut policies: the fixed $t=\tfrac12$; the cut that maximizes balanced accuracy on
a held-out threshold subset; and the cut that maximizes $F_1$ on it. The
threshold-free metrics --- AP, ROC-AUC and Brier --- do not depend on the cut. For
the threshold-dependent metrics we report all three policies, with full tables
and a summary figure in \appref{app:cuts}; where a single number is
wanted, balanced accuracy, MCC and G-mean are read at the balanced-accuracy cut
and $F_1$ at the $F_1$ cut. Logit adjustment additionally carries its native cut, zero on the shifted
logit; being rank-preserving, its AP and AUC equal those of \texttt{none} up to
numerical ties introduced by probability clipping (\appref{app:repro}). The full
protocol is as follows.
\begin{enumerate}
\item Outer split: stratified shuffle, test size $\tfrac12$, $30$ trials, fixed
      seed.
\item Inner $70/30$ stratified fit\,/\,threshold split of the training half. All
      models are fit on the fit subset; thresholds and the marginal standardizations
      $z_M(\cdot)$ in \eqref{eq:drr} are fit on the threshold subset; both are
      applied to the test half. For the 1-Platt Brier column each pipeline's
      score is additionally Platt-scaled on the threshold subset; DRR carries
      this map inside its pipeline (Algorithm~\ref{alg:drr}).
\item Direction: $\thetahat$ solves the dual \eqref{eq:dual} at tolerance
      $\delta_n$ (\S\ref{sec:dual}); the test score is $\phik(\x)^\top\thetahat$.
\item Classifiers: SVM (RBF), random forest, naive Bayes, kernel logistic
      regression, with an MLP added on the tabular benchmarks; every method is
      run under each applicable learner. HistGB is a standalone tabular learner.
\end{enumerate}

\subsection{Metrics and prediction}\label{sec:metrics}
We report seven metrics. Two are threshold-free ranking metrics; for DRR they are
computed from the raw fused score $s$ of Algorithm~\ref{alg:drr}, not from the
Platt probability. Average precision is the recall-increment-weighted summary
\begin{equation}
  \mathrm{AP}=\textstyle\sum_k (\mathrm{R}_k-\mathrm{R}_{k-1})\,\mathrm{Prec}_k ,
\end{equation}
over the stepwise precision--recall curve, with recall $\mathrm{R}_k$ and
precision $\mathrm{Prec}_k$ at the $k$th cut; it is the primary metric because it
reflects precision for the rare class at the observed prevalence. ROC-AUC is
invariant to prevalence for fixed class-conditional score laws and complements
AP. Four standard operating-point metrics are evaluated at a single cut: balanced
accuracy (the mean of the two class recalls), $F_1$ (the harmonic mean of
minority precision and recall), the Matthews correlation coefficient (MCC, the
correlation between predicted and true labels), and G-mean (the geometric mean
of the two class recalls). The Brier score, the mean squared error of the
predicted probabilities against the labels, measures probability accuracy,
combining calibration and refinement; for DRR it uses the Platt probability.

We run three analyses: per-method means over datasets and trials; paired
multi-dataset significance comparisons; and a rank-correlation analysis of the
per-task margin $\Delta=\text{DRR}-\text{R\&R resampler}$ with the feasibility
floor $\varphi_n$ and with the normalized effective sample size $\ESS(\what)/n_0$,
both computable before any classifier is trained.

We test significance following the standard framework of \citet{demsar2006} for
comparing classifiers over multiple datasets, which avoids the distributional
assumptions of a paired $t$-test and treats each dataset as one paired
observation. For each discrimination metric we run the Friedman omnibus test
across the eleven methods of the control-versus-rest family --- DRR$(D{=}128)$
and the ten baselines listed in \appref{app:significance} --- over the $22$ benchmarks
on which all eleven have complete results.
Two post-hoc views are reported. For an all-pairs ranking
view we use the Friedman average ranks with the Nemenyi critical difference,
shown as a critical-difference diagram in \S\ref{sec:results}. For the
control-versus-rest view, in which DRR is the single method of interest, we use
the Wilcoxon signed-rank test of DRR against each competitor, paired over the
datasets, and correct the resulting family of $p$-values within each metric by
the Holm step-down procedure to hold the family-wise error rate at $\alpha=0.05$;
these corrected $p$-values are collected in \appref{app:significance}. Holm
correction is applied within each named family --- the ten baselines per metric
here, and in \S\ref{sec:results} the two resolutions of the primary contrast,
the four dual-alone contrasts, the six weight contrasts, and the separate
two-resolution same-feature LR-control family --- not across families.

\section{Results}\label{sec:results}
We run the protocol on $24$ tabular datasets (imbalance ratio $10$ to $130$ on
the full sample; see \appref{app:repro})
and eight OpenML/GEMLeR one-versus-rest tumor tasks defined on a shared
gene-expression cohort ($n=1{,}545$, $p=10{,}935$); the latter are distinct
prediction tasks, not independent cohorts (\appref{app:repro}). Each task
uses $30$ trials. We assess the dual contribution, the shared-dual R\&R
resampler, and the broader comparator panel.

\subsection{Contribution of the dual score}
The $w=0$ reference in this ablation is the marginally standardized base
score. Floating-point ties make its AP slightly different from the native
base probabilities in Table~\ref{tab:panel}; the stored ranking scores are used
throughout. Adding the dual score improves mean AP at both feature resolutions. At $w=1/2$
it raises mean tabular AP from $0.4216$ to $0.4500$ at $D=0$ (paired gain
$0.0285$, dataset-bootstrap 95\% CI $[0.0188,0.0397]$, 23 wins/1 loss) and to
$0.4554$ at $D=128$ (gain $0.0339$, $[0.0232,0.0458]$, 24/0); both have
Holm-adjusted $p=3.6\times10^{-5}$ within the two-resolution primary family.
The dual score alone reaches $0.4116$ at $D=0$ and $0.4500$ at $D=128$. At
$D=128$ it already exceeds the base by $0.028$ (17 wins/7 losses, Holm
$p=0.037$) and every baseline mean in Table~\ref{tab:panel}; fusion then adds
little to the mean ($+0.0055$, Hodges--Lehmann $0.0012$, 95\% CI
$[-0.0075,0.0202]$, 12/12, Holm $p=1.0$). At $D=0$ the dual score alone is no
better than the base ($-0.010$, 13/11), and fusion adds $0.0385$
(Hodges--Lehmann $0.0243$, CI $[0.0112,0.0691]$, 17/7, Holm $p=0.146$ for the
signed-rank test; the bootstrap interval is for the mean difference). The dual
score is thus the main source of the mean gain at $D=128$. Fusion extends
the improvement over the base to all 24
datasets, whereas the dual score alone does so on 17, and at $D=0$, where the dual
score alone is weak, it supplies most of the gain. Across the grid
$w\in\{0,0.1,0.25,0.5,1\}$ mean AP increases with $w$ (Figure~\ref{fig:weight});
relative to $w=1/2$, $w=1$ gains $0.0016$ at $D=0$ (Holm $p=0.059$) and $0.0051$
at $D=128$ (95\% CI $[0.0030,0.0072]$, 20/4, Holm $p=0.00114$). The direction
of this ordering agrees with Theorem~\ref{thm:fusion}(b): from the per-dataset
estimates of $(d_1,d_2,\rho)$ in \appref{app:sensitivity}, the implied optimum
$w_+^\star=(d_2-\rho d_1)/(d_1-\rho d_2)$ at $D=128$ has median $0.73$ over
datasets and exceeds the deployed $\tfrac12$ on 16 of 24, although it reaches
one --- the case $d_2\ge d_1$ --- on only six. Because $w=1$ is the grid endpoint, the grid does not
locate an optimum; we keep $w=1/2$ as the reported configuration and discuss a
wider cross-fitted grid in \S\ref{sec:discussion}. A supplementary sweep over
$D$, $\eta$, and the random-feature draw, together with per-dataset estimates
of $(d_1,d_2,\rho)$ and the fusion deflection of Theorem~\ref{thm:fusion}, is
reported in \appref{app:sensitivity}: the gain over the base is stable for
$D\ge64$ and $\eta\le0.2$, and the implied change in deflection tracks the
observed AP gain (Spearman $+0.73$ over dataset--learner pairs).

\begin{figure}[t]
\centering
\includegraphics[width=0.98\linewidth]{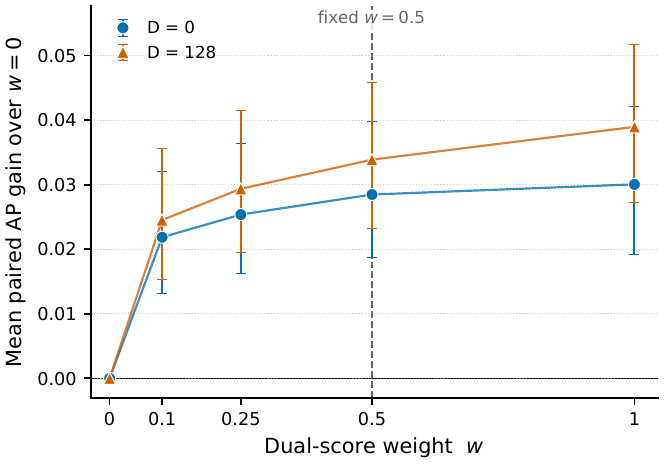}
\caption{Fixed-weight sensitivity on the 24 tabular datasets. Each point is the
mean paired raw-score AP gain, $\mathrm{AP}_d(w)-\mathrm{AP}_d(0)$, averaged over
datasets $d$. Bars are pointwise 95\% percentile intervals from 9,999 paired
dataset-bootstrap resamples. Lines connect the evaluated grid points only;
the dashed vertical line marks the fixed $w=1/2$.}
\label{fig:weight}
\end{figure}

A paired control fits logistic regression on the same $\phik$ and evaluates
its standalone and fused scores (\appref{app:samephi}). DRR has higher mean
AP than the fixed $C=1$ LR fusion, while weaker LR regularization can close
or reverse this ordering; the full protocol and results are given there.

\subsection{Comparison with the R\&R resampler}
DRR and the R\&R resampler use the same direction resolution, learner family,
and split; they differ in whether the direction is added to the score or
converted into a stochastic relabeling-and-retraining pipeline. On the 24 tabular
datasets DRR wins on AP in 22 at both resolutions, with mean gains $0.090$ at
$D=0$ and $0.092$ at $D=128$; \texttt{abalone} and \texttt{abalone\_19} are
different row subsets and labelings of the same UCI abalone table
(\appref{app:repro}) and are treated as separate paired observations. Against the R\&R resampler, DRR also wins all eight gene-expression tasks, with mean gains
$0.147$ and $0.138$. Because those tasks share one cohort we do not pool them
with the tabular datasets for inference; descriptively, DRR wins 30 of 32 tasks
overall with a mean gain near $0.10$. Figure~\ref{fig:dom} shows the task-level
comparison.

\begin{figure}[t]
\centering
\includegraphics[width=0.76\linewidth]{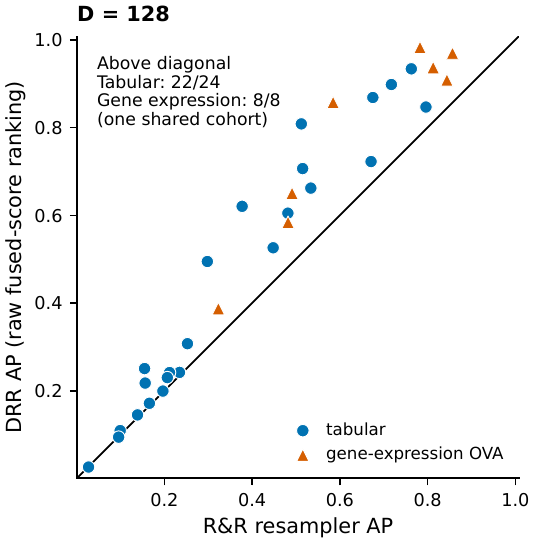}
\caption{Raw-score average precision of DRR versus the R\&R resampler, both at
$D=128$, one point per task. Points above the diagonal are DRR wins: 22 of 24
tabular datasets and all eight gene-expression tasks.}
\label{fig:dom}
\end{figure}

The size of this advantage varies across tasks. Over the 32 tasks the per-task
$\Delta\mathrm{AP}$ is positively associated with the feasibility floor
(Spearman $+0.73$ at $D=0$ and $+0.75$ at $D=128$), whereas its association with
$\ESS(\what)/n_0$ has the expected negative sign (a smaller $\ESS(\what)/n_0$
means more donor reuse) but is weak ($-0.12$ and
$-0.08$).  Figure~\ref{fig:mech} shows the plot.

\begin{figure}[t]
\centering
\includegraphics[width=0.98\linewidth]{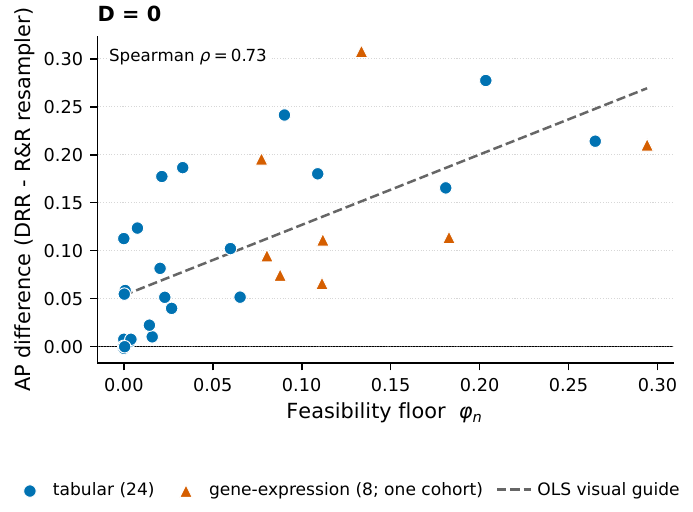}
\caption{Per-task advantage of DRR over the R\&R resampler
($\Delta\mathrm{AP}$, $D{=}0$) against the feasibility floor $\varphi_n$. The
Spearman correlation is $+0.73$ across $32$ label tasks. The dashed line is a
least-squares visual guide.}
\label{fig:mech}
\end{figure}

\subsection{Performance across the benchmark panel}
Table~\ref{tab:panel} compares ranking and probability accuracy across the
tabular benchmark panel. DRR$(D{=}128)$ has the highest mean AP ($0.455$, against
$0.440$ for the highest-scoring baseline), whereas balanced random forest has
the highest mean ROC-AUC ($0.864$, against $0.858$ for DRR). Multi-learner rows
average SVM, RF, NB, KLR, and MLP, except cost-sensitive learning, which uses
SVM/RF/KLR. Balanced RF, RUSBoost, and HistGB are standalone classifiers;
learner-specific results are reported in \appref{app:percls}.

\begin{table}[!htb]
\centering
\caption{Available-case mean performance on the tabular benchmarks over 30
trials. AP and AUC measure ranking; Brier measures probability accuracy.
Best values are bold (lower is better for Brier).}
\label{tab:panel}
\normalsize
\setlength{\tabcolsep}{6pt}
\begin{tabular}{@{}lcccc@{}}
\toprule
& & & \multicolumn{2}{c}{Brier}\\
\cmidrule(l){4-5}
Method & AP & AUC & reported & 1-Platt\\
\midrule
base (\texttt{none}) & 0.422 & 0.832 & 0.0588 & 0.0425\\
cost-sensitive & 0.440 & 0.855 & 0.0615 & 0.0416\\
logit adjust & 0.421 & 0.831 & -- & 0.0361\\
\texttt{SMOTE} & 0.415 & 0.837 & 0.0828 & 0.0408\\
\texttt{SMOTE-ENN} & 0.402 & 0.843 & 0.1087 & 0.0399\\
\texttt{FW-SMOTE} & 0.426 & 0.839 & 0.0642 & 0.0410\\
balanced RF & 0.424 & \textbf{0.864} & 0.1513 & 0.0412\\
RUSBoost & 0.369 & 0.823 & 0.1533 & 0.0437\\
R\&R-resample$(D{=}128)$ & 0.364 & 0.840 & 0.1616 & 0.0431\\
HistGB & 0.427 & 0.843 & 0.0355 & 0.0417\\
dual score alone $(D{=}0)$ & 0.412 & 0.839 & 0.0371 & 0.0371\\
dual score alone $(D{=}128)$ & 0.450 & 0.856 & \textbf{0.0347} & \textbf{0.0347}\\
\textbf{DRR}$(D{=}0)$ & 0.450 & 0.856 & 0.0352 & 0.0352\\
\textbf{DRR}$(D{=}128)$ & \textbf{0.455} & 0.858 & 0.0349 & 0.0349\\
\bottomrule
\end{tabular}
\par\smallskip
\begin{minipage}{\linewidth}
\footnotesize\setlength{\baselineskip}{10pt}
$N=22$ datasets for SMOTE-ENN and $N=24$ otherwise, including FW-SMOTE.
Reported Brier uses each pipeline's output; 1-Platt applies one held-out Platt
map per pipeline. DRR and the dual score alone already include this map, so
their two entries coincide. The dual score does not depend on the base learner,
so its rows are one value per dataset; they come from the fixed-weight ablation
run. Boldface uses unrounded means. Dataset-bootstrap 95\% intervals for these
means have half-widths of about $0.09$--$0.12$ (AP) and $0.04$--$0.05$ (AUC)
for every row, reflecting between-dataset heterogeneity; the paired comparisons
with intervals are in \appref{app:significance}.
\end{minipage}
\end{table}

\par\medskip\noindent
Reported-output Brier reflects each pipeline's probability mapping and
training distribution. Native Brier is $0.15$--$0.16$ for balanced RF,
RUSBoost, and the R\&R resampler, whereas HistGB ($0.0355$) and DRR ($0.0349$) are both
fitted at the original prior and lie within $0.001$ of each other; the
reliability curves 
in Figure~\ref{fig:calib}{} show the native outputs. The 1-Platt column
applies one held-out Platt map to every pipeline. It closes most of the gap for
the rebalancing arms ($0.040$--$0.044$) but worsens HistGB
($0.0355\to0.0417$), whose native output is already well calibrated. After this
matched treatment the dual score alone and DRR have the lowest Brier ($0.0347$
and $0.0349$); DRR's Brier is below cost-sensitive learning (Hodges--Lehmann
difference $-0.0046$, 95\% CI $[-0.0108,-0.0014]$) and the R\&R resampler
($-0.0069$, $[-0.0135,-0.0012]$) but not distinguishable from logit adjustment
($-0.0010$, $[-0.0022,0.0000]$).
\par\medskip

\begin{figure}[t]
\centering
\includegraphics[width=0.78\linewidth]{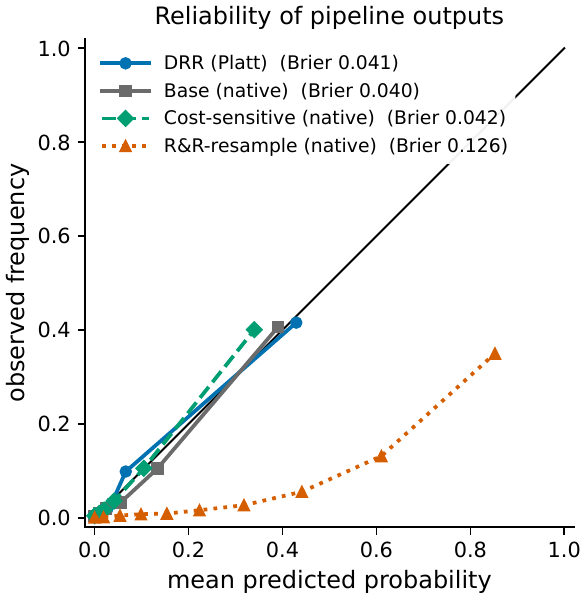}
\caption{Reliability diagram pooled over four illustrative datasets
(\texttt{oil}, \texttt{glass4}, \texttt{yeast}, \texttt{us\_crime};
random-forest base). Test predictions from $20$ repeated half splits are pooled
across splits and divided into up to ten probability-quantile bins;
tied discrete probabilities are left unsplit, so fewer bins may be occupied.
DRR is shown after its pipeline Platt map, whereas the other curves show native
outputs. Matched one-Platt Brier results are given in Table~\ref{tab:panel}; the
legend reports Brier scores for the displayed outputs.}
\label{fig:calib}
\end{figure}

The fixed published FW-SMOTE configuration has mean AP $0.426$.
HistGB scores
$0.427$ AP and $0.843$ AUC. DRR's mean AP advantage over it is $0.0286$, but
the paired difference is not significant (Holm $p=0.303$) and the per-dataset
picture is heterogeneous (\appref{app:perdataset}): among the four tabulated
methods HistGB is best on 9 datasets and DRR on 10. HistGB's largest wins over
DRR (\texttt{arrhythmia}, $0.724$ against $0.251$, and \texttt{compustat},
$0.305$ against $0.218$) are of the same order as DRR's over HistGB
(\texttt{yeast}, $0.621$ against $0.069$, and \texttt{glass4}, $0.707$ against
$0.448$), so the mean difference reflects a few large wins on each
side, not uniform superiority. A class-weighted HistGB variant is reported in
\appref{app:repro}.

Restricting the multi-learner arms to their common SVM/RF/KLR bases gives mean
AP $0.460$ for DRR, $0.440$ for cost-sensitive learning, and $0.378$ for the
R\&R resampler over the $24$ tabular datasets; the paired tests in
\appref{app:significance} pair DRR's five-learner dataset means with
cost-sensitive learning's three-learner means, and the learner-matched values
show that the ordering does not depend on this choice. SMOTE-ENN has no valid
rows on \texttt{glass4} and \texttt{phoss}; FW-SMOTE completes all $24$.
Failures are omitted, not imputed, and on the $22$ datasets
common to DRR and SMOTE-ENN the mean AP difference is $+0.0554$, against
$+0.0539$ available-case.

The operating-point metrics --- balanced accuracy, $F_1$, MCC and G-mean ---
depend on the cut, so we report the three cut policies of \S\ref{sec:proto}
(full tables and a summary figure in \appref{app:cuts}). At the fixed $t=0.5$
DRR trails several rebalancing methods. After threshold tuning DRR has the highest mean on
every operating-point metric under either objective. The margins over
cost-sensitive learning and balanced random forest are small, however: at most
$0.005$ on balanced accuracy and G-mean (for example balanced accuracy $0.789$
against $0.788$) and between $0.004$ and $0.027$ on MCC and $F_1$, none
significant, with paired win/loss counts near even
(\appref{app:significance}). The clear operating-point advantage is over the
resampling arms, RUSBoost, the base, and logit adjustment.

The average-rank, or critical-difference, diagram on average precision
(Figure~\ref{fig:cd}) gives DRR$(D{=}128)$ the best mean rank. The Nemenyi
comparison separates it from SMOTE, SMOTE-ENN, and the R\&R resampler at
$\alpha=0.05$; its rank gap from FW-SMOTE ($3.23$) is below the critical
difference ($3.55$). In the separate eleven-method complete-case family, the
Friedman omnibus test rejects equality on every discrimination metric
($\chi^2(10)=46.19$--$71.67$, all $p<10^{-5}$). The Holm-corrected
control-versus-rest comparisons in \appref{app:significance} show significant
improvements over all four data-level resampling arms, \texttt{RUSBoost}, the
base, and logit adjustment on AP, ROC-AUC, balanced accuracy, $F_1$, MCC and
G-mean, with threshold-dependent metrics at the tuned cuts. Differences from
cost-sensitive learning, balanced random forest, and HistGB are not significant
on any of these metrics; the paired interval estimates in
\appref{app:significance} bound these differences (for AP, upper limits of
$0.03$--$0.06$).

\begin{figure}[t]
\centering
\includegraphics[width=0.98\linewidth]{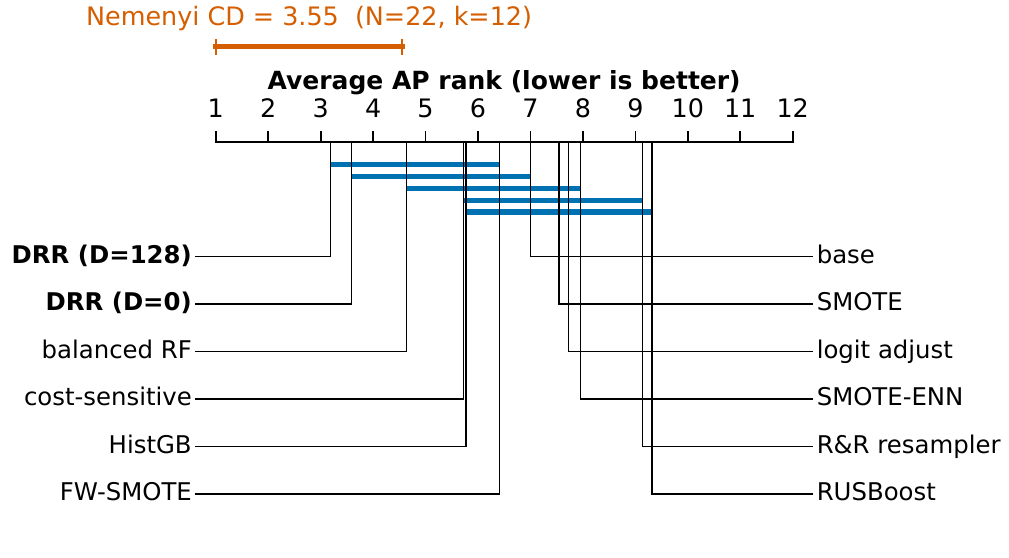}
\caption{Critical-difference diagram of average ranks on average precision over
the $22$ complete tabular benchmarks. Methods joined by a bar are not
significantly different at $\alpha=0.05$ (Nemenyi, $12$ methods, critical
difference $3.55$). Lower ranks are better. This all-pairs analysis includes both
DRR resolutions and is separate from the eleven-method control-versus-rest
family.}
\label{fig:cd}
\end{figure}

\subsection{Dataset- and learner-level analyses}
Per-dataset distributions ( Figure~\ref{fig:box}) place the DRR variants at or near the
top of the median and interquartile summaries for AP and $F_1$.

\begin{figure}[!htbp]
\centering
\includegraphics[width=0.98\linewidth]{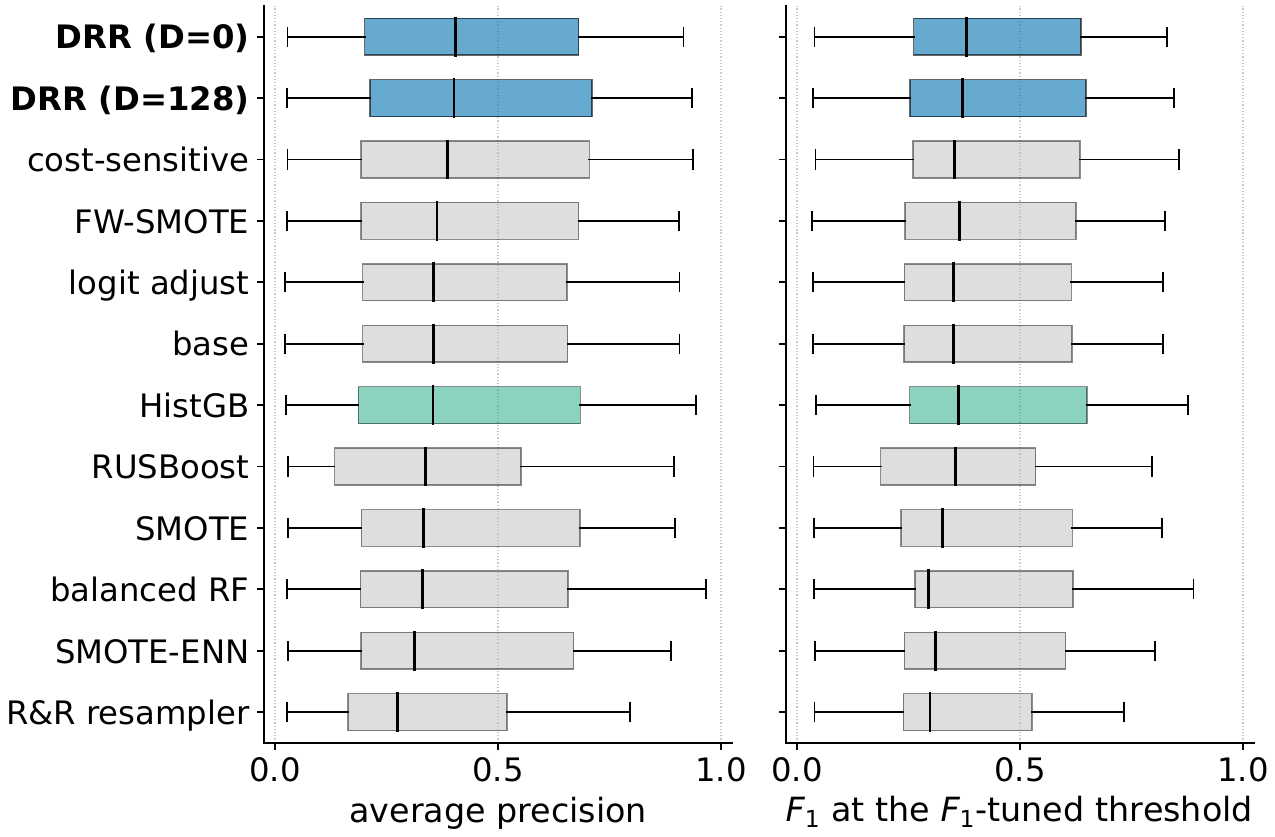}
\caption{Per-dataset distributions of average precision (left) and $F_1$ (right)
over the tabular benchmarks, one value per available dataset. Boxes show the
interquartile range and median; whiskers extend to the most extreme observations
within 1.5 interquartile ranges of the box; observations beyond the whiskers
are shown individually. Methods are sorted by median AP.}
\label{fig:box}
\end{figure}

By base classifier, DRR$(D{=}128)$ has higher mean AP than the base,
cost-sensitive learning where defined, and the R\&R resampler in each reported
base-classifier column.
For naive Bayes, the dual term lifts average precision from $0.339$ to
$0.423$, but the dual score alone reaches $0.450$ under every learner
(Table~\ref{tab:panel}): with a base this weak, and likewise for kernel
logistic regression ($0.437$ against $0.450$), the fixed $w=\tfrac12$
under-weights the dual, whereas under the three stronger bases fusion improves
on the dual score alone by $0.020$--$0.026$ (15--16 wins of 24). The
per-classifier averages are reported in \appref{app:percls}. The per-dataset results in
\appref{app:perdataset}\ list, for each benchmark, the average precision of
DRR, cost-sensitive learning,
HistGB, and the R\&R resampler, together with the imbalance ratio and the
feasibility floor, ordered by the floor.

\section{Discussion}\label{sec:discussion}
DRR turns the survey-calibration dual that raking already computes into a
score and adds it, at a fixed weight, to a classifier trained at the original
prior. On 24 tabular benchmarks it
improves average precision over its base on every dataset at $D=128$, exceeds
the R\&R resampler built from the same dual on 22, and has the highest mean
average precision in Table~\ref{tab:panel}. The shared-dual comparison
demonstrates the value of using the fitted direction directly for ranking:
the gain is achieved while preserving the base classifier and its original
training data.

The method also gives raking a direct interpretation as a classification
score. Under the log-linear tilt model and exact population matching, the
dual recovers the log density ratio up to an additive constant. The fusion
analysis then identifies how the two scores' separation and correlation
determine the benefit of combining them. The empirical ablation illustrates
two useful regimes: at $D=0$, fusion contributes most of the gain; at
$D=128$, the dual supplies a strong score and fusion extends improvement over
the base from 17 to all 24 datasets. With the SVM, random-forest, and MLP
bases, fusion improves mean AP over the dual alone by $0.020$--$0.026$.

DRR can be applied to an existing classifier at modest additional cost. The
dual requires one convex solve per fit subset, with a median runtime under
one second in our experiments (\appref{app:repro}), and the same fitted
direction is reused across base learners. Prediction requires score
evaluation and standardization, without classifier refitting. The
supplementary sweep finds the gain over the base stable for $D\ge64$ and
$\eta\le0.2$ (\appref{app:sensitivity}), supporting the use of a fixed
configuration across the benchmark panel.

The present evaluation concerns fixed-weight binary classification, with
the gene-expression tasks treated one-versus-rest. The R\&R comparison uses
the uncapped implementation described in \appref{app:repro}, and the
separation theory's AUC and AUPRC consequences use the bi-normal model
(Remark~\ref{rem:df}). Cross-fitted selection of the fusion weight is a
promising extension, particularly for weaker base scores where larger dual
weights improve ranking in the sensitivity analysis. Applying DRR to boosted
classifiers and joint multiclass prediction would extend its use to further
classification settings.

\section*{Data availability}
The 16 inherited tabular CSV files are copies of public benchmark data sets
used in the earlier R\&R study, retained unmodified; eight additional tabular benchmarks were retrieved from the
imbalanced-learn Zenodo collection (version 1, doi:10.5281/zenodo.61452). The
eight gene-expression tasks are the OpenML version-1 datasets identified in
\appref{app:repro}. The code and trial-level outputs supporting the tables and
figures will be made publicly available upon journal publication and are available from
the corresponding author on request. The trial-level outputs identify the dataset,
trial, base classifier, method, and applicable cut and model settings, so
table entries and their dispersion are recomputable. Dataset names and label rules are documented in
\appref{app:repro}.

\section*{CRediT authorship contribution statement}
\textbf{Dongha Kim}: Conceptualization, Methodology, Software, Investigation,
Writing -- original draft. \textbf{Seunghwan Park}: Conceptualization,
Methodology, Formal analysis, Writing -- review \& editing.

\section*{Declaration of competing interest}
The authors report financial support from the National Research Foundation of
Korea. They declare no other competing financial interests or personal
relationships that could have appeared to influence the work reported in this
paper.

\section*{Funding}
This work was supported by the National Research Foundation of Korea (NRF)
grants funded by the Korean government (MSIT): Nos.\ RS-2025-24683613
and RS-2026-25496805 to Dongha Kim, and No.\ RS-2026-25489716 to
Seunghwan Park.
The funders provided financial support only and had no role in study design;
data collection, analysis, or interpretation; manuscript preparation; or the
decision to submit the article for publication.

\section*{Declaration of generative AI and AI-assisted technologies in the
manuscript preparation process}
The authors used OpenAI Codex to assist with English-language editing,
sentence-level and structural revisions, and preparation of the plotting code
for the figures.
The authors reviewed and edited all AI-assisted content, independently verified
the scientific content, and take full responsibility for the content of the article.

\appendix

\section{Reproducibility details}\label{app:repro}
All experiments, analyses, and figures were produced by Python scripts run by
the authors. OpenAI Codex assisted with preparing the plotting and analysis
scripts. The authors reviewed the code; numerical outputs were checked against
saved predictions, protocol specifications, and recorded results, and plotted
values were checked against the underlying results or formulas.

All experiments use master seed $20240725$. Each of the $30$ trials uses a
stratified $50/50$ train--test split. The training half is split, with
stratification, $70/30$ into fit and threshold subsets using seed
$20240725+\text{trial}$.
Encoders, scalers, feature maps, marginal score standardizations, thresholds, and
Platt maps are fitted without access to the test half. Numeric variables are
retained; categorical variables are one-hot encoded with previously unseen levels
ignored. No imputation or univariate feature selection is applied.

Base learners use the following settings, fixed in advance and not tuned per
dataset; this uniformity aids comparability but need not be optimal for every
benchmark. Arguments not listed retain the
defaults of the archived environment. The SVM uses a standardized RBF kernel and probability
estimates; the random forest has $100$ trees; mixed naive Bayes uses Gaussian
continuous blocks and categorical blocks with smoothing $\alpha=1$; kernel logistic
regression uses standardization, $100$ RBF Nystr\"om components, and logistic
regression with at most $2{,}000$ iterations; and the tabular-only MLP has hidden
layers $(64,32)$ and at most $500$ iterations. The cost-sensitive arm sets
\texttt{class\_weight=balanced} for SVM, random forest, and kernel logistic
regression. Balanced random forest uses $100$ trees and RUSBoost $50$ estimators.
The standalone HistGB model uses learning rate $0.05$, at most $300$ iterations,
$31$ leaves, minimum leaf size $10$, $\ell_2$ regularization $1$, and early stopping
with validation fraction $0.15$, patience $20$, and tolerance $10^{-7}$; its
class-weighted variant (AP $0.406$, AUC $0.830$, Brier $0.071$ native and
$0.040$ after one Platt map) is a secondary ablation outside the main
inferential family.

The FW-SMOTE arm implements the published equations and Algorithm~1 of
\citet{fwsmote}, with the basic RIM quantifier $Q(t)=t^{0.4}$, Minkowski
exponent $p=2$, and all encoded coordinates retained. Fit-subset Fisher scores
use the absolute difference of class means divided by the sum of the two
sample variances. Coordinates sorted by decreasing Fisher score receive
weights $q_j=(j/d)^{0.4}-((j-1)/d)^{0.4}$, where $d$ is the encoded dimension;
distance is $\{\sum_jq_j(x_j-x'_j)^2\}^{1/2}$. Ties retain coordinate order.
Each minority anchor selects three distinct neighbors among its five nearest
other minority observations and generates one uniform linear interpolation per selected
neighbor. Small fit splits use $k=\min(5,n_{1,\mathrm{fit}}-1)$ and
$N=\min(3,k)$. Thus this arm adds $N$ synthetic positives per original
positive rather than enforcing a balanced training prior. This is a fixed
published configuration, not a reproduction of the authors' parameter-tuned
optimum. The archived Fisher-weighted SMOTE adaptation, which directly used
squared Fisher scores and oversampled to balance, is retained in the raw
archive but is replaced by this arm in the manuscript's FW-SMOTE comparisons.
All 720 FW-SMOTE splits complete. Twenty MLP fits reach the fixed 500-iteration
limit and are retained under the common stopping rule. On all 30
\texttt{glass4} splits, the Nystr\"om component count is automatically capped
by the available training observations. These recorded warnings do not
produce missing prediction rows.

For DRR, $D=0$ uses the fitted linear/one-hot coordinates, whereas $D=128$
also includes a 128-dimensional Gaussian RFF block when continuous variables
are present. The linear block is standardized using fit-subset means and
standard deviations, then divided by its largest Euclidean norm on that subset
(with denominator floor $10^{-12}$). The one-hot block is divided by the square
root of the number of categorical variables. The RFF block uses the
$\sqrt{2/D}\cos(\cdot)$ convention. Each active block is then multiplied by
$1/\sqrt{B}$, where $B$ is the number of active blocks
(\texttt{block\_scaling=equal}); categorical interaction blocks are disabled.
Thus adding an RFF block also changes the common block multiplier.
The RFF block uses a kernel with
\[
 \gamma_{\mathrm{rff}}=\{2\,\operatorname{median}_{i<j}\|z_i-z_j\|^2\}^{-1},
\]
estimated from at most $1{,}000$ fit observations (subsampling seed $0$); the RFF
draw uses seed $20240725$. The mathematical tolerance rule is
\[
 \delta_n=\max\{\varphi_n+0.05(\gamma_n-\varphi_n),\,1.05\varphi_n\}.
\]
In implementation, $\varphi_n$ is replaced by the achieved residual upper bound
$U_n$ from pairwise Frank--Wolfe, and $\gamma_n-U_n$ is clipped below at zero.
The routine allows at most 20,000 iterations, with relative tolerance $10^{-8}$
and absolute numerical floor $10^{-12}$ in its duality-gap stopping rule.
If the
resulting tolerance reaches $\gamma_n$, the solver returns zero dual and the
base weights. For the exact-norm problem, multiplying the entire feature map by
a common positive constant scales $\varphi_n$, $\gamma_n$, and $\delta_n$ together
and leaves the tilt weights and dual scores unchanged; this invariance does not
extend to arbitrary relative rescaling of the blocks.

The fixed-grid weight sensitivity analysis uses $w\in\{0,0.1,0.25,0.5,1\}$. The weight-sensitivity figure bootstraps the
24 paired dataset-level AP differences from $w=0$, with 9,999 resamples and
pointwise percentile 95\% intervals, using the raw-score results.
The dense dual uses damped Newton
iterations for at most $200$ steps with nominal gradient tolerance $10^{-9}$
($10^{-6}$ fallback on a line-search stall or the iteration limit), norm smoothing
$10^{-10}$, and numerical ridge $10^{-10}$ when the feature dimension is at most
$600$; otherwise it uses L-BFGS-B. The final fused score is mapped to a
probability by logistic Platt scaling on the threshold subset. AP and ROC-AUC are
computed from the raw pre-calibration score; Brier and thresholded predictions
use the Platt probability. The Platt slope is an unconstrained logistic
coefficient and can be nonpositive on a weak threshold split, in which case the
map reverses the ranking; this occurred for $1.3$--$1.4\%$ of the $3{,}600$
fitted DRR maps at each $D$ (and $2.2\%$ of all $54{,}000$ score units in the
ablation run), which is why AP and ROC-AUC use the raw score.
Logit adjustment applies the prior shift to probabilities clipped at $10^{-9}$,
which ties probabilities saturated at $0$ or $1$ (frequent for naive Bayes and
the MLP) and moves its AP and AUC by about $0.001$ relative to \texttt{none}.
In the archived runs, solving the dual took a median of $0.4$ s ($D=0$) and
$0.5$ s ($D=128$) per fit subset, with maxima of $167$ s and $163$ s on
\texttt{phoss}, the highest-dimensional categorical design ($p=480$); the dual
is solved once per fit subset and reused across base learners, so DRR adds no
classifier refit.

For nearest-neighbor samplers we set
$k=\max\{1,\min(5,n_{1,\mathrm{fit}}-1)\}$ whenever the implementation exposes the
neighbor count; the SMOTE-ENN arm uses its archived-library default. The R\&R
resampler uses $m=\lfloor n_{\mathrm{fit}}/4\rfloor$ and PPS sampling with
replacement; following the R\&R convention, the added pseudo-minority block is
split evenly between relabeled majority donors and bootstrapped minority units
(the code's \texttt{rho}$=1/2$, unrelated to the within-class correlation
$\rho$ of the fusion analysis). It records but does not enforce the diagnostic
condition $m\le\ESS(\what)$ ($\kappa_{\ESS}=1$).

The cap would bind in most of the panel: with $m=\lfloor n_{\mathrm{fit}}/4\rfloor$,
$m$ exceeds $\ESS(\what)$ in 91\% of the dataset--trial cells at $D=0$ and 92\%
at $D=128$ (median $m/\ESS(\what)$ of $3.0$ and $2.7$). A separate $D=0$
diagnostic on the same splits (SVM and random-forest bases; 24 datasets, 30
trials) compared an uncapped anchor draw of $m=\lfloor n_0/4\rfloor$ --- a
majority-scale draw size deliberately different from the comparator's --- with
the caps $m\le\tfrac12\ESS$, $\ESS$, and $2\ESS$, which bind in 97\%, 90\%, and
68\% of cells and reduce the mean number of distinct relabeled donors from $167$
to $58$, $96$, and $132$. Enforcing the caps lowers the resampler's mean AP by
$0.071$, $0.033$, and $0.012$ (dataset-paired; 22, 22, and 17 losses of 24;
Holm-adjusted $p=0.0003$, $0.0003$, and $0.0016$) and raises the
redraw-to-redraw dispersion of its AP (root-mean-square standard deviation
over redraws $0.034$ uncapped against $0.054$, $0.047$, and $0.039$). Because a
cap also changes the training mass and the effective regularization of the
retrained classifier, this is a comparator diagnostic rather than an estimate
of a pure sampling-noise effect. It favors the uncapped draw under these
$D=0$, SVM/random-forest, and anchor-draw conditions; it does not establish
the effect of capping under the main $D=128$ comparison.

The archived environment
records scikit-learn 1.3.0 and imbalanced-learn 0.11.0; a complete environment lock
will be released with the code.

Table~\ref{tab:datasets} lists the 24 tabular benchmarks with their sample
sizes, attribute counts, and imbalance ratios; \texttt{abalone} is the
$2{,}338$-row KEEL subset of the UCI abalone table with ring class 17 as the
positive class, and \texttt{abalone\_19} is the full $4{,}177$-row table with
ring class 19 positive. The gene-expression tasks are the
eight OpenML version-1 one-versus-rest views in
Table~\ref{tab:ova}; identifiers and label definitions are taken from the
corresponding OpenML metadata. They share the same $1{,}545$ rows and $10{,}935$
predictors and are therefore distinct label tasks on one cohort, not eight
independent cohorts. The underlying GEMLeR collection derives from the
expO/International Genomics Consortium data (GEO accession GSE2109), using
Affymetrix U133 Plus 2.0, MAS5-normalized, variance-filtered preprocessing
 \citep{stiglic2010}. The
benchmark files do not provide batch labels,
and the \texttt{ID\_REF} field is ignored; splits are consequently row-level rather
than batch-grouped.

\begin{table}[t]
\centering
\normalsize
\setlength{\tabcolsep}{4pt}
\caption{The 24 tabular benchmarks: sample size $n$, number of raw attributes
$p$ and its split into continuous and categorical variables, minority count
$n_1$, and imbalance ratio $\mathrm{IR}=n_0/n_1$ on the full sample, as loaded
by the experiment drivers before encoding. The fit-subset ratio used in
Table~\ref{tab:perdataset} differs slightly through stratified rounding.}
\label{tab:datasets}
\begin{tabular}{@{}lrrrrrr@{}}
\toprule
dataset & $n$ & $p$ & continuous & categorical & $n_1$ & IR\\
\midrule
\texttt{abalone} & 2{,}338 & 8 & 7 & 1 & 58 & 39\\
\texttt{abalone\_19} & 4{,}177 & 10 & 10 & 0 & 32 & 130\\
\texttt{arrhythmia} & 452 & 278 & 278 & 0 & 25 & 17\\
\texttt{boundary} & 3{,}505 & 175 & 0 & 175 & 123 & 27\\
\texttt{cam} & 18{,}916 & 132 & 0 & 132 & 942 & 19\\
\texttt{car-good} & 1{,}728 & 6 & 0 & 6 & 69 & 24\\
\texttt{compustat} & 13{,}657 & 20 & 20 & 0 & 520 & 25\\
\texttt{flare-f} & 1{,}066 & 11 & 7 & 4 & 43 & 24\\
\texttt{glass4} & 214 & 9 & 9 & 0 & 13 & 15\\
\texttt{hypo} & 2{,}068 & 19 & 5 & 14 & 122 & 16\\
\texttt{isolet} & 7{,}797 & 617 & 617 & 0 & 600 & 12\\
\texttt{led} & 443 & 7 & 0 & 7 & 37 & 11\\
\texttt{libras\_move} & 360 & 90 & 90 & 0 & 24 & 14\\
\texttt{mammography} & 11{,}183 & 6 & 6 & 0 & 260 & 42\\
\texttt{oil} & 937 & 46 & 46 & 0 & 41 & 22\\
\texttt{phoss} & 11{,}411 & 480 & 0 & 480 & 613 & 18\\
\texttt{scene} & 2{,}407 & 294 & 294 & 0 & 177 & 13\\
\texttt{sick} & 3{,}621 & 21 & 1 & 20 & 225 & 15\\
\texttt{spectrometer} & 531 & 93 & 93 & 0 & 45 & 11\\
\texttt{us\_crime} & 1{,}994 & 100 & 100 & 0 & 150 & 12\\
\texttt{vowel0} & 988 & 12 & 10 & 2 & 90 & 10\\
\texttt{winequality} & 656 & 11 & 11 & 0 & 18 & 35\\
\texttt{yeast} & 482 & 8 & 8 & 0 & 20 & 23\\
\texttt{yeast\_ml8} & 2{,}417 & 103 & 103 & 0 & 178 & 13\\
\bottomrule
\end{tabular}
\end{table}

\begin{table}[t]
\centering
\normalsize
\caption{OpenML/GEMLeR one-versus-rest gene-expression tasks. All entries are
version 1, use the same $1{,}545$ samples and $10{,}935$ predictors after
\texttt{ID\_REF} is removed, and contain no missing values. The named tissue is
class 1 and \texttt{Other} is class 0.}
\label{tab:ova}
\begin{tabular}{@{}lrrr@{}}
\toprule
task & OpenML ID & $n_1$ & $n_0/n_1$\\
\midrule
\texttt{OVA\_Breast} & 1128 & 344 & 3.49\\
\texttt{OVA\_Colon} & 1161 & 286 & 4.40\\
\texttt{OVA\_Endometrium} & 1142 & 61 & 24.33\\
\texttt{OVA\_Kidney} & 1134 & 260 & 4.94\\
\texttt{OVA\_Lung} & 1130 & 126 & 11.26\\
\texttt{OVA\_Ovary} & 1166 & 198 & 6.80\\
\texttt{OVA\_Prostate} & 1146 & 69 & 21.39\\
\texttt{OVA\_Uterus} & 1138 & 124 & 11.46\\
\bottomrule
\end{tabular}
\end{table}

\section{First-stage PPS variance}\label{app:pps}
For completeness, we give the idealized result summarized in the main text. Use
the same majority-side sample and raking solution in both channels. Conditional
on the observed data, hold these quantities fixed for the PPS draw; across
repeated datasets they may vary. We compare two estimators of the target minority mean
$\mu_1=\mathbb{E}[\phik(X)\mid Y=1]$. Let
$\widehat\mu_1=n_1^{-1}\sum_{i:Y_i=1}\phik(X_i)$, and let
$\widetilde\mu_1$ be the unweighted mean of $m$ majority embeddings drawn
independently with replacement with probabilities $\what$. Assume i.i.d.
class-conditional samples, finite class-conditional second moments, and an RNG
conditionally independent of the observed data. For the across-dataset comparison
below, treat the feature map as fixed rather than estimated from the same sample.
Conditional on the data,
\[
 \mathbb{E}(\widetilde\mu_1\mid\mathrm{data})=\Phi^\top\what,
 \qquad
 \operatorname{Var}(\widetilde\mu_1\mid\mathrm{data})=\Sigma_{\what}/m,
\]
where $\Sigma_{\what}$ is the $\what$-weighted covariance operator and
$\Sigma_1=\operatorname{Cov}\{\phik(X)\mid Y=1\}$. Under exact matching (which
requires $\varphi_n=0$), $\Phi^\top\what=\widehat\mu_1$, and the law of total
variance gives
\[
 \operatorname{Var}(\widetilde\mu_1)
 =\Sigma_1/n_1+\mathbb{E}(\Sigma_{\what}/m).
\]
Thus the PPS channel contributes a positive-semidefinite term absent from the
plug-in mean; it is strict only in directions with positive weighted variance.
If an ESS rule $m\le\kappa_{\ESS}\ESS(\what)$ is enforced, then
$\Sigma_{\what}/m\succeq\Sigma_{\what}/\{\kappa_{\ESS}\ESS(\what)\}$ for that
weight vector; both $\Sigma_{\what}$ and $\ESS$ change with $\what$, so this is
not a monotonicity statement across weights.

\begingroup\clubpenalty=10000
At the deployed positive tolerance and a nonzero exact-norm solution,
stationarity gives
$\Phi^\top\what=\widehat\mu_1-\delta_n\thetahat/\|\thetahat\|$.
The displacement is relative to the plug-in target. With norm smoothing its
magnitude is $a_n=\delta_n\|\thetahat\|/\sqrt{\|\thetahat\|^2+\varepsilon_s^2}$,
subject to solver tolerance. Neither identity bears on the AP of the retrained
classifier; that comparison is empirical (\S\ref{sec:results}).
\par\endgroup

We audited the first displayed conditional variance directly on 34 distinct
fixed states spanning three datasets, two trials, both $D=0$ and $D=128$, and
three draw-size policies, using 1,000 Monte Carlo draws per state. All cells met
the prespecified Monte Carlo tolerance; the empirical-to-theoretical trace ratio
ranged from $0.972$ to $1.059$ and averaged $1.001$.

\section{Per-dataset results}\label{app:perdataset}
Table~\ref{tab:perdataset} gives the per-dataset average precision behind the
aggregate numbers and the diagnostic analysis in the main manuscript. For each of the $24$
tabular benchmarks it lists the imbalance ratio, the feasibility floor
$\varphi_n$, and the average precision of DRR$(D{=}128)$, cost-sensitive learning,
HistGB, and the R\&R resampler, with the DRR-minus-R\&R-resampler margin in the
last column. Datasets are ordered by decreasing $D=128$ floor. Larger margins
cluster near the top of this ordering, and the two losses to the R\&R resampler
(\texttt{yeast\_ml8}, \texttt{abalone\_19}) are both about $0.001$ in magnitude and sit among
low-floor rows, although low-floor tasks can also show sizable positive margins
(\texttt{mammography} $+0.128$, \texttt{compustat} $+0.061$). This is the rank
association reported in \S\ref{sec:results}.

\begin{table}[t]
\centering
\normalsize
\setlength{\tabcolsep}{2pt}
\renewcommand{\arraystretch}{1}
\caption{Per-dataset raw-score average precision of DRR$(D{=}128)$,
cost-sensitive learning, HistGB, and the R\&R resampler (R\&R-resample, at
$D=128$ here and in the following tables) over the 24 tabular benchmarks, with
imbalance ratio (IR) and the $D=128$ feasibility floor $\varphi_n$, averaged
over 30 trial splits. Datasets are sorted by decreasing floor; the best of the
four methods per row is in bold. The final column is the unrounded
DRR-minus-R\&R-resampler margin.}
\label{tab:perdataset}
\begin{tabular}{@{}lrrccccc@{}}
\toprule
dataset & IR & $\varphi_n$ & DRR & \shortstack{cost-\\sens.} & HistGB & \shortstack{R\&R-\\resample} & $\Delta$AP\\
\midrule
\texttt{yeast} & 23 & 0.387 & \textbf{0.621} & 0.473 & 0.069 & 0.377 & $+0.243$\\
\texttt{glass4} & 15 & 0.310 & 0.707 & \textbf{0.782} & 0.448 & 0.515 & $+0.191$\\
\texttt{spectrometer} & 11 & 0.249 & 0.869 & \textbf{0.899} & 0.795 & 0.675 & $+0.193$\\
\texttt{libras\_move} & 15 & 0.231 & \textbf{0.808} & 0.798 & 0.646 & 0.512 & $+0.296$\\
\texttt{vowel0} & 10 & 0.156 & 0.934 & 0.938 & \textbf{0.944} & 0.763 & $+0.171$\\
\texttt{arrhythmia} & 18 & 0.097 & 0.251 & 0.331 & \textbf{0.724} & 0.155 & $+0.096$\\
\texttt{oil} & 22 & 0.092 & \textbf{0.495} & 0.442 & 0.403 & 0.298 & $+0.197$\\
\texttt{winequality} & 37 & 0.081 & 0.145 & 0.147 & \textbf{0.156} & 0.139 & $+0.006$\\
\texttt{hypo} & 16 & 0.074 & \textbf{0.847} & 0.834 & 0.798 & 0.796 & $+0.051$\\
\texttt{led} & 11 & 0.065 & \textbf{0.723} & 0.679 & 0.709 & 0.671 & $+0.051$\\
\texttt{mammography} & 42 & 0.053 & 0.662 & 0.597 & \textbf{0.676} & 0.534 & $+0.128$\\
\texttt{flare-f} & 24 & 0.048 & \textbf{0.242} & 0.200 & 0.217 & 0.212 & $+0.030$\\
\texttt{us\_crime} & 12 & 0.046 & \textbf{0.526} & 0.457 & 0.494 & 0.448 & $+0.078$\\
\texttt{isolet} & 12 & 0.035 & 0.898 & 0.878 & \textbf{0.908} & 0.717 & $+0.181$\\
\texttt{abalone} & 40 & 0.029 & \textbf{0.308} & 0.258 & 0.231 & 0.253 & $+0.055$\\
\texttt{scene} & 13 & 0.024 & 0.230 & \textbf{0.236} & 0.235 & 0.207 & $+0.023$\\
\texttt{yeast\_ml8} & 13 & 0.016 & 0.095 & 0.095 & 0.094 & \textbf{0.096} & $-0.001$\\
\texttt{boundary} & 28 & 0.016 & \textbf{0.109} & 0.084 & 0.081 & 0.099 & $+0.010$\\
\texttt{car-good} & 24 & 0.008 & 0.605 & 0.629 & \textbf{0.666} & 0.481 & $+0.123$\\
\texttt{phoss} & 18 & 0.004 & 0.172 & 0.170 & \textbf{0.190} & 0.166 & $+0.006$\\
\texttt{abalone\_19} & 132 & 0.003 & 0.026 & \textbf{0.028} & 0.025 & 0.027 & $-0.001$\\
\texttt{compustat} & 25 & 0.003 & 0.218 & 0.217 & \textbf{0.305} & 0.156 & $+0.061$\\
\texttt{sick} & 15 & 0.000 & \textbf{0.200} & 0.174 & 0.176 & 0.197 & $+0.003$\\
\texttt{cam} & 19 & 0.000 & 0.242 & 0.205 & \textbf{0.252} & 0.235 & $+0.008$\\
\midrule
mean & & 0.084 & \textbf{0.455} & 0.440 & 0.427 & 0.364 & $+0.092$\\
\bottomrule
\end{tabular}
\end{table}

\section{Significance tests}\label{app:significance}
Table~\ref{tab:significance} reports the post-hoc control-versus-rest comparisons
that accompany the Friedman omnibus test in the main manuscript. Each cell is the
Holm-corrected $p$-value of the Wilcoxon signed-rank test of DRR$(D{=}128)$
against the row method on the column metric, paired over the tabular benchmarks
and corrected within each metric across the ten baselines; the
threshold-dependent metrics are taken at each method's tuned cut. For
cost-sensitive learning, which is defined only for SVM, RF and KLR, the pairing
uses DRR's five-learner dataset means against cost-sensitive learning's
three-learner means; restricting DRR to the same three learners gives mean AP
$0.460$ and does not change the ordering. A cell is set in
bold when DRR is significantly better at $\alpha=0.05$, that is when the corrected
$p$-value is below $0.05$ \emph{and} DRR lies on the better side of the metric.
The family contains discrimination metrics only; a Brier comparison of DRR's
pipeline probability against baselines' native outputs would be asymmetric.
Matched one-Platt Brier results and paired probability-accuracy intervals are
reported in 
\S\ref{sec:results}. On the common $N=22$ datasets,
the eleven-method Friedman statistics (all with $10$ degrees of freedom) are
$57.567$ for AP ($p=1.0{\times}10^{-8}$),
$71.667$ for AUC ($p=2.1{\times}10^{-11}$),
$54.565$ for balanced accuracy ($p=3.8{\times}10^{-8}$),
$54.477$ for $F_1$ ($p=4.0{\times}10^{-8}$),
$46.192$ for MCC ($p=1.3{\times}10^{-6}$), and $48.429$ for G-mean ($p=5.2{\times}10^{-7}$). Thus the post-hoc tests are warranted throughout; the pairing
covers the $24$ benchmarks, or the $22$ on which SMOTE-ENN has complete results. Reading across, DRR is significantly better than seven of
the ten baselines --- the base, logit adjustment, all four data-level
resamplers, and \texttt{RUSBoost} --- on every one of the six metrics, while its
differences from cost-sensitive learning, balanced random forest, and HistGB do
not reach significance on any of them.

\begin{table}[t]
\centering
\normalsize
\setlength{\tabcolsep}{2pt}
\caption{Holm-corrected Wilcoxon signed-rank $p$-values for DRR$(D{=}128)$
against each baseline, per metric, paired over the tabular benchmarks;
threshold-dependent metrics at the tuned cut. Bold marks a comparison in which
DRR is significantly better at $\alpha=0.05$. Higher is better for every metric
shown; probability accuracy is compared separately in the main manuscript.}
\label{tab:significance}
\begin{tabular}{@{}lcccccc@{}}
\toprule
method & AP & AUC & bal.acc & $F_1$ & MCC & G-mean\\
\midrule
\texttt{none} & $\mathbf{1.2{\times}10^{-6}}$ & $\mathbf{2.2{\times}10^{-4}}$ & $\mathbf{3.0{\times}10^{-4}}$ & $\mathbf{2.7{\times}10^{-5}}$ & $\mathbf{9.7{\times}10^{-4}}$ & $\mathbf{4.8{\times}10^{-4}}$\\
cost-sensitive & $0.303$ & $0.747$ & $1.0$ & $0.236$ & $1.0$ & $1.0$\\
logit adjust & $\mathbf{1.2{\times}10^{-6}}$ & $\mathbf{1.6{\times}10^{-4}}$ & $\mathbf{3.0{\times}10^{-4}}$ & $\mathbf{1.7{\times}10^{-5}}$ & $\mathbf{1.0{\times}10^{-3}}$ & $\mathbf{4.4{\times}10^{-4}}$\\
\texttt{SMOTE} & $\mathbf{6.6{\times}10^{-5}}$ & $\mathbf{2.4{\times}10^{-4}}$ & $\mathbf{4.3{\times}10^{-4}}$ & $\mathbf{3.8{\times}10^{-4}}$ & $\mathbf{0.012}$ & $\mathbf{1.0{\times}10^{-3}}$\\
\texttt{SMOTE-ENN} & $\mathbf{3.3{\times}10^{-5}}$ & $\mathbf{1.5{\times}10^{-3}}$ & $\mathbf{7.7{\times}10^{-3}}$ & $\mathbf{1.4{\times}10^{-4}}$ & $\mathbf{0.013}$ & $\mathbf{0.010}$\\
\texttt{FW-SMOTE} & $\mathbf{6.3{\times}10^{-5}}$ & $\mathbf{5.3{\times}10^{-4}}$ & $\mathbf{2.6{\times}10^{-3}}$ & $\mathbf{3.8{\times}10^{-4}}$ & $\mathbf{0.017}$ & $\mathbf{3.5{\times}10^{-3}}$\\
balanced RF & $0.303$ & $0.415$ & $1.0$ & $0.690$ & $1.0$ & $1.0$\\
RUSBoost & $\mathbf{9.6{\times}10^{-4}}$ & $\mathbf{2.6{\times}10^{-3}}$ & $\mathbf{6.2{\times}10^{-3}}$ & $\mathbf{1.7{\times}10^{-3}}$ & $\mathbf{0.013}$ & $\mathbf{7.9{\times}10^{-3}}$\\
R\&R-resample & $\mathbf{4.8{\times}10^{-6}}$ & $\mathbf{1.6{\times}10^{-3}}$ & $\mathbf{2.3{\times}10^{-3}}$ & $\mathbf{4.1{\times}10^{-5}}$ & $\mathbf{3.0{\times}10^{-4}}$ & $\mathbf{3.5{\times}10^{-3}}$\\
HistGB & $0.303$ & $0.181$ & $0.655$ & $0.690$ & $1.0$ & $0.830$\\
\bottomrule
\end{tabular}
\end{table}

\begingroup\widowpenalty=10000
Because $p$-values alone do not show practical magnitude, Table~\ref{tab:effects}
reports paired effects for four strong comparators. Differences are formed
as DRR minus comparator at the same metric-specific tuned cut. The R\&R
resampler has intervals separated from zero on all six metrics. In contrast, the
intervals against cost-sensitive learning, balanced RF, and HistGB include zero
on the primary AP metric, consistent with the nonsignificant Holm comparisons.
\par\endgroup

\begin{table}[t]
\centering
\normalsize
\caption{Dataset-level paired effects for DRR$(D=128)$ against four strong
comparators. W/L is the number of DRR wins/losses (no ties); mean is the paired
mean difference and HL is the Hodges--Lehmann pseudomedian with a 95\% percentile
bootstrap interval from 10,000 resamples of datasets. Higher is better for every
metric. All comparisons have $N=24$. The lower AUC limit against HistGB is
$-4\times10^{-5}$.}
\label{tab:effects}
\setlength{\tabcolsep}{3pt}
\renewcommand{\arraystretch}{1}
\begin{tabular}{@{}llrrl@{}}
\toprule
metric & comparator & W/L & mean & HL [95\% CI]\\
\midrule
AP & cost-sensitive & 15/9 & $+0.016$ & $+0.016$ [$-0.001$,$+0.032$]\\
 & balanced RF & 14/10 & $+0.031$ & $+0.013$ [$-0.003$,$+0.056$]\\
 & R\&R-resample & 22/2 & $+0.092$ & $+0.090$ [$+0.042$,$+0.125$]\\
 & HistGB & 14/10 & $+0.029$ & $+0.015$ [$-0.008$,$+0.051$]\\
\addlinespace
AUC & cost-sensitive & 11/13 & $+0.003$ & $+0.002$ [$-0.005$,$+0.010$]\\
 & balanced RF & 9/15 & $-0.006$ & $-0.006$ [$-0.014$,$+0.004$]\\
 & R\&R-resample & 19/5 & $+0.018$ & $+0.016$ [$+0.008$,$+0.026$]\\
 & HistGB & 15/9 & $+0.015$ & $+0.010$ [$-0.000$,$+0.030$]\\
\addlinespace
bal.acc & cost-sensitive & 9/15 & $+0.002$ & $+0.000$ [$-0.006$,$+0.008$]\\
 & balanced RF & 12/12 & $+0.001$ & $0.000$ [$-0.009$,$+0.011$]\\
 & R\&R-resample & 18/6 & $+0.021$ & $+0.019$ [$+0.009$,$+0.030$]\\
 & HistGB & 13/11 & $+0.011$ & $+0.009$ [$-0.004$,$+0.030$]\\
\addlinespace
$F_1$ & cost-sensitive & 14/10 & $+0.011$ & $+0.008$ [$-0.003$,$+0.022$]\\
 & balanced RF & 13/11 & $+0.027$ & $+0.008$ [$-0.007$,$+0.042$]\\
 & R\&R-resample & 21/3 & $+0.071$ & $+0.068$ [$+0.030$,$+0.099$]\\
 & HistGB & 12/12 & $+0.017$ & $+0.008$ [$-0.009$,$+0.042$]\\
\addlinespace
MCC & cost-sensitive & 11/13 & $+0.004$ & $+0.000$ [$-0.009$,$+0.013$]\\
 & balanced RF & 12/12 & $+0.011$ & $+0.003$ [$-0.014$,$+0.023$]\\
 & R\&R-resample & 19/5 & $+0.058$ & $+0.043$ [$+0.018$,$+0.086$]\\
 & HistGB & 14/10 & $+0.019$ & $+0.008$ [$-0.008$,$+0.050$]\\
\addlinespace
G-mean & cost-sensitive & 11/13 & $+0.004$ & $+0.001$ [$-0.007$,$+0.013$]\\
 & balanced RF & 12/12 & $+0.003$ & $+0.000$ [$-0.011$,$+0.014$]\\
 & R\&R-resample & 18/6 & $+0.026$ & $+0.023$ [$+0.010$,$+0.037$]\\
 & HistGB & 13/11 & $+0.019$ & $+0.009$ [$-0.005$,$+0.039$]\\
\bottomrule
\end{tabular}
\end{table}

\section{Operating-point metrics under three cut policies}\label{app:cuts}
\begin{figure}[!htbp]
\centering
\includegraphics[width=0.98\linewidth]{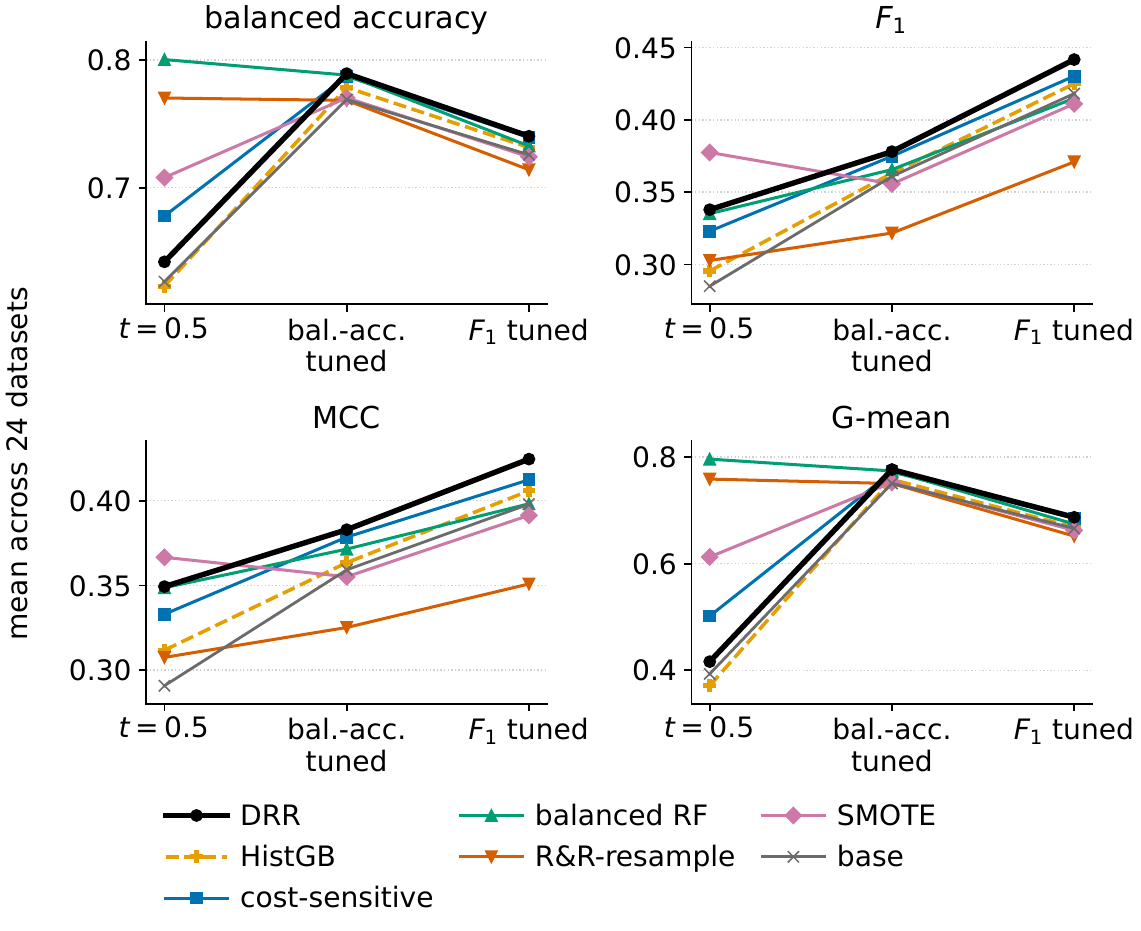}
\caption{Operating-point metrics of seven methods under the fixed $t=0.5$,
balanced-accuracy-tuned, and $F_1$-tuned cut policies (means over the 24
tabular benchmarks; every plotted method has complete results). DRR is
DRR$(D{=}128)$. Curves summarize each pipeline's test predictions at cuts
selected under the stated policy.}
\label{fig:cuts}
\end{figure}
The threshold-dependent metrics are reported here under each of the three cut
policies summarized in Figure~\ref{fig:cuts}: the fixed $t=0.5$
(Table~\ref{tab:cut05}), the balanced-accuracy-tuned cut
(Table~\ref{tab:cutbal}), and the $F_1$-tuned cut (Table~\ref{tab:cutf1}). Each
entry is an available-case mean: $N=22$ for SMOTE-ENN
and $N=24$ otherwise; best per column in bold. The ordering changes with the
cut policy: at $t=0.5$ several rebalancing methods and ensembles lead, while
DRR and the base trail. Under
either tuned cut DRR leads every column. Logit adjustment has no $t=0.5$ row
because its native operating point is the zero cut on the prior-shifted logit;
its tuned rows are rank-equivalent to those of \texttt{none}.

\begin{table}[!htbp]
\centering
\caption{Operating-point metrics at the fixed cut $t=0.5$, available-case means
($N=22$ for SMOTE-ENN and $N=24$ otherwise). Best per
column in bold.}
\label{tab:cut05}
\begin{tabular}{@{}lcccc@{}}
\toprule
Method & bal.acc & $F_1$ & MCC & G-mean\\
\midrule
\texttt{none} & 0.627 & 0.285 & 0.291 & 0.393\\
cost-sensitive & 0.678 & 0.323 & 0.333 & 0.502\\
logit adjust & -- & -- & -- & --\\
\texttt{SMOTE} & 0.708 & 0.377 & 0.367 & 0.613\\
\texttt{SMOTE-ENN} & 0.747 & \textbf{0.386} & \textbf{0.373} & 0.707\\
\texttt{FW-SMOTE} & 0.676 & 0.374 & 0.365 & 0.532\\
balanced RF & \textbf{0.800} & 0.335 & 0.349 & \textbf{0.796}\\
RUSBoost & 0.749 & 0.334 & 0.324 & 0.730\\
R\&R-resample$(D{=}128)$ & 0.770 & 0.303 & 0.307 & 0.759\\
HistGB & 0.623 & 0.295 & 0.312 & 0.371\\
\textbf{DRR}$(D{=}0)$ & 0.639 & 0.333 & 0.345 & 0.411\\
\textbf{DRR}$(D{=}128)$ & 0.642 & 0.338 & 0.349 & 0.416\\
\bottomrule
\end{tabular}
\end{table}

\begin{table}[!htbp]
\centering
\caption{Operating-point metrics at the balanced-accuracy-tuned cut, available-case
means ($N=22$ for SMOTE-ENN and $N=24$ otherwise). Best per
column in bold.}
\label{tab:cutbal}
\begin{tabular}{@{}lcccc@{}}
\toprule
Method & bal.acc & $F_1$ & MCC & G-mean\\
\midrule
\texttt{none} & 0.769 & 0.361 & 0.359 & 0.751\\
cost-sensitive & 0.787 & 0.375 & 0.379 & 0.772\\
logit adjust & 0.768 & 0.362 & 0.360 & 0.750\\
\texttt{SMOTE} & 0.770 & 0.356 & 0.355 & 0.752\\
\texttt{SMOTE-ENN} & 0.774 & 0.348 & 0.351 & 0.757\\
\texttt{FW-SMOTE} & 0.774 & 0.364 & 0.364 & 0.757\\
balanced RF & 0.788 & 0.366 & 0.371 & 0.774\\
RUSBoost & 0.757 & 0.329 & 0.326 & 0.742\\
R\&R-resample$(D{=}128)$ & 0.768 & 0.322 & 0.325 & 0.750\\
HistGB & 0.778 & 0.363 & 0.363 & 0.758\\
\textbf{DRR}$(D{=}0)$ & 0.786 & 0.375 & 0.379 & 0.773\\
\textbf{DRR}$(D{=}128)$ & \textbf{0.789} & \textbf{0.378} & \textbf{0.383} & \textbf{0.777}\\
\bottomrule
\end{tabular}
\end{table}

\begin{table}[!htbp]
\centering
\caption{Operating-point metrics at the $F_1$-tuned cut, available-case means
($N=22$ for SMOTE-ENN and $N=24$ otherwise). Best per
column in bold.}
\label{tab:cutf1}
\begin{tabular}{@{}lcccc@{}}
\toprule
Method & bal.acc & $F_1$ & MCC & G-mean\\
\midrule
\texttt{none} & 0.726 & 0.418 & 0.398 & 0.666\\
cost-sensitive & 0.739 & 0.430 & 0.412 & 0.685\\
logit adjust & 0.726 & 0.418 & 0.398 & 0.666\\
\texttt{SMOTE} & 0.724 & 0.411 & 0.391 & 0.663\\
\texttt{SMOTE-ENN} & 0.724 & 0.404 & 0.385 & 0.664\\
\texttt{FW-SMOTE} & 0.729 & 0.422 & 0.403 & 0.670\\
balanced RF & 0.733 & 0.415 & 0.398 & 0.674\\
RUSBoost & 0.713 & 0.371 & 0.349 & 0.655\\
R\&R-resample$(D{=}128)$ & 0.714 & 0.371 & 0.351 & 0.652\\
HistGB & 0.731 & 0.425 & 0.406 & 0.668\\
\textbf{DRR}$(D{=}0)$ & 0.737 & 0.438 & 0.421 & 0.683\\
\textbf{DRR}$(D{=}128)$ & \textbf{0.740} & \textbf{0.442} & \textbf{0.425} & \textbf{0.687}\\
\bottomrule
\end{tabular}
\end{table}

\section{Per-classifier results}\label{app:percls}
Table~\ref{tab:percls} reports average precision under each of the five base
classifiers separately, averaged over the $24$ tabular benchmarks;
DRR$(D{=}128)$ leads every column.

\begin{table}[!htbp]
\centering
\caption{Average precision by base classifier, averaged over the $24$ tabular
benchmarks. DRR$(D{=}128)$ has higher mean AP than the base and the R\&R
resampler under all five base classifiers, and than cost-sensitive learning
under the three applicable classifiers. Cost-sensitive learning was not run for
naive Bayes, whose implementation exposes no class weighting, nor for the MLP;
dashes mark these cells.}
\label{tab:percls}
\begin{tabular}{@{}lccccc@{}}
\toprule
base classifier & SVM & RF & NB & KLR & MLP\\
\midrule
\texttt{none} & 0.458 & 0.445 & 0.339 & 0.412 & 0.457\\
cost-sensitive & 0.432 & 0.458 & -- & 0.428 & --\\
R\&R-resample & 0.382 & 0.361 & 0.302 & 0.390 & 0.384\\
\textbf{DRR}$(D{=}128)$ & \textbf{0.471} & \textbf{0.470} & \textbf{0.423} & \textbf{0.437} & \textbf{0.476}\\
\bottomrule
\end{tabular}
\end{table}

\section{The fusion deflection}\label{app:theorycurve}
Figure~\ref{fig:theory} plots the fusion deflection $d'(w)$ as the fusion weight
$w$ varies, for the three regimes distinguished in the main manuscript. With
$r=d_2/d_1$ and $q=\pi_0\pi_1 d_1^2$, the exact population-marginal condition for
the deployed weight to improve separation, $d'(w_{\mathrm{eff}})>d_1$, is
$\tfrac12\sqrt{(1+q)/(1+qr^2)}\,(1-r^2)<2(r-\rho)$; it reduces to
$r>\sqrt{5+4\rho}-2$ as $q\to0$.

\begin{figure}[!htb]
\centering
\includegraphics[width=0.98\linewidth]{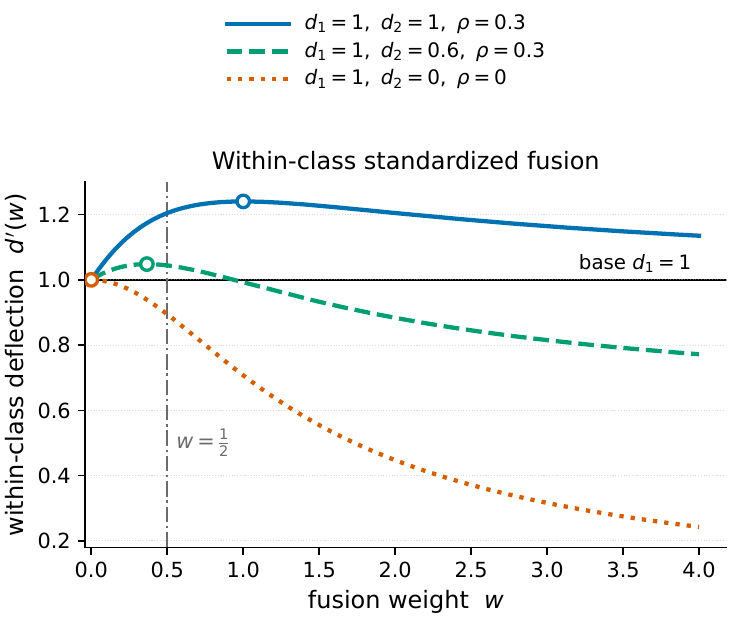}
\caption{Fusion deflection $d'(w)$ for within-class-standardized scores.
The three curves use $(d_1,d_2,\rho)=(1,1,0.3)$, $(1,0.6,0.3)$, and $(1,0,0)$.
An informative second score can improve separation, whereas a
zero-separation, uncorrelated score reduces it. Circles mark the optimal weights
under the stated model; the vertical line marks $w=1/2$ for reference.}
\label{fig:theory}
\end{figure}

\section{Same-feature logistic controls}\label{app:samephi}
This experiment asks whether the raking dual supplies more useful rescoring
information than an ordinary discriminatively trained score on the same
feature map. It uses the 24 tabular datasets, 30 stratified splits, and
five base learners of the main evaluation. All arms share freshly generated
base and dual predictions within each split. The fit, threshold, and test
indices follow the original protocol; the reconstructed feature arrays for
the logistic controls match the raking arrays exactly, verified by their
SHA256 hashes on all three subsets. The current drivers and imported-source
hashes are archived with the score caches; this matched comparison is
reported separately from the historical runs. An audit against the August
archive found absolute DRR panel-mean differences below $0.000018$ and
a largest absolute dataset-mean AP difference of $0.000495$; all comparisons
here use the same current frozen implementation within each split.
The experiment reports raw AP and AUC.

The primary control is $\ell_2$-penalized logistic regression (LR), with
$C=1$, an intercept, the L-BFGS solver, tolerance $10^{-6}$, and at most
2,000 iterations. It is fitted to the same $\phi_k$ coordinates as the dual,
without another feature scaler, on the original-prior fit subset.
Its decision function $\ell(x)$ supplies the second score. For each base
learner the matched fusion is
\[
 s_{\mathrm{LR}}(x)=z_M\!\big(f_{\mathrm{base}}(x)\big)
                  +\tfrac12 z_M\!\big(\ell(x)\big),
\]
using the same threshold-subset standardization and fixed coefficient as
DRR. A class-weighted LR with \texttt{class\_weight=balanced} is a secondary
control. All choices were fixed before the new results were inspected.
The secondary grid $C\in\{0.1,1,10,100\}$ is reported in full; no value is
selected from test performance. Standalone LR uses its raw decision
function, the dual uses its raw linear score, and the standalone base uses
its native probability. Within a fixed cache, floating-point ties can make the AP of a raw
score differ slightly from that of its positive affine standardization;
every fused arm shares exactly the same standardized base component.

Table~\ref{tab:samephi_primary} reports the $C=1$ comparison. We first
average over learners within a trial and then over trials within a dataset;
the 24 dataset values are the paired inferential units. The two primary
tests compare DRR with base$+$unweighted LR in AP at $D=0$ and $D=128$
using two-sided Wilcoxon signed-rank tests, with Holm correction across
these two tests. The intervals are percentile 95\% intervals for the
mean difference from 9,999 dataset-bootstrap draws. AUC, balanced LR,
and the other $C$ settings are secondary analyses.

At $C=1$, the mean AP differences DRR minus base$+$LR are $+0.0127$
at $D=0$ and $+0.0103$ at $D=128$.
The advantage depends on LR regularization
(Table~\ref{tab:samephi_grid}). At $D=128$, base$+$LR reaches AP
$0.4558$ at $C=10$ and
$0.4569$ at $C=100$, matching or slightly exceeding
DRR's $0.4555$. Standalone LR at these settings reaches
$0.4582$ and $0.4590$.
Thus, this comparison does not establish that raking is necessary for the
rescoring gain: a generic discriminative score on the same representation
can achieve similar performance. Matching the representation controls the
feature map; the learners still use different objectives and regularizers.

\begin{table}[t]
\centering
\normalsize
\setlength{\tabcolsep}{3pt}
\caption{Same-feature controls at the fixed primary value $C=1$.
Top: dataset-averaged raw AP and AUC. Bottom: DRR-minus-base$+$LR
AP differences, dataset-bootstrap intervals, wins/losses/ties, and
Holm-adjusted $p$-values for the two primary comparisons.}
\label{tab:samephi_primary}
\begin{tabular}{@{}lrrrr@{}}
\toprule
& \multicolumn{2}{c}{$D=0$} & \multicolumn{2}{c}{$D=128$}\\
\cmidrule(lr){2-3}\cmidrule(lr){4-5}
Method & AP & AUC & AP & AUC\\
\midrule
Base (native probability) & $0.4223$ & $0.8325$ & $0.4223$ & $0.8325$\\
Dual alone & $0.4116$ & $0.8394$ & $0.4500$ & $0.8564$\\
LR alone & $0.3771$ & $0.8293$ & $0.4099$ & $0.8503$\\
Balanced LR alone & $0.3943$ & $0.8500$ & $0.4328$ & $0.8654$\\
Base $+$ LR & $0.4373$ & $0.8534$ & $0.4452$ & $0.8572$\\
Base $+$ balanced LR & $0.4419$ & $0.8591$ & $0.4505$ & $0.8628$\\
\textbf{DRR} & $0.4501$ & $0.8559$ & $0.4555$ & $0.8581$\\
\bottomrule
\end{tabular}
\par\medskip
\begin{tabular}{@{}rrrrr@{}}
\toprule
$D$ & $\Delta$AP & 95\% CI & W/L/T & Holm $p$\\
\midrule
0 & $+0.0127$ & $[+0.0028,+0.0244]$ & 16/8/0 & $0.056$\\
128 & $+0.0103$ & $[+0.0033,+0.0178]$ & 17/7/0 & $0.042$\\
\bottomrule
\end{tabular}
\end{table}

\begin{table}[t]
\centering
\normalsize
\setlength{\tabcolsep}{3pt}
\caption{Full fixed-$C$ sensitivity for same-feature logistic controls.
Unweighted and balanced LR are each shown alone and fused with the
base at weight $1/2$. Every entry averages the same 24 datasets and
30 trials; no test-selected $C$ is used in the primary comparison.}
\label{tab:samephi_grid}
\begin{tabular}{@{}rrrrrr@{}}
\toprule
\multicolumn{6}{c}{AP}\\
\midrule
$D$ & $C$ & \shortstack{LR\\alone} & \shortstack{Base\\$+$ LR}
& \shortstack{Balanced\\LR alone} & \shortstack{Base $+$\\balanced LR}\\
\midrule
0 & 0.1 & $0.3540$ & $0.4323$ & $0.3645$ & $0.4347$\\
0 & 1 & $0.3771$ & $0.4373$ & $0.3943$ & $0.4419$\\
0 & 10 & $0.4136$ & $0.4464$ & $0.4233$ & $0.4495$\\
0 & 100 & $0.4393$ & $0.4523$ & $0.4317$ & $0.4515$\\
\midrule
128 & 0.1 & $0.3697$ & $0.4367$ & $0.3881$ & $0.4410$\\
128 & 1 & $0.4099$ & $0.4452$ & $0.4328$ & $0.4505$\\
128 & 10 & $0.4582$ & $0.4558$ & $0.4530$ & $0.4555$\\
128 & 100 & $0.4590$ & $0.4569$ & $0.4470$ & $0.4549$\\
\bottomrule
\end{tabular}
\par\medskip
\begin{tabular}{@{}rrrrrr@{}}
\toprule
\multicolumn{6}{c}{AUC}\\
\midrule
$D$ & $C$ & \shortstack{LR\\alone} & \shortstack{Base\\$+$ LR}
& \shortstack{Balanced\\LR alone} & \shortstack{Base $+$\\balanced LR}\\
\midrule
0 & 0.1 & $0.8205$ & $0.8514$ & $0.8278$ & $0.8533$\\
0 & 1 & $0.8293$ & $0.8534$ & $0.8500$ & $0.8591$\\
0 & 10 & $0.8531$ & $0.8593$ & $0.8629$ & $0.8623$\\
0 & 100 & $0.8606$ & $0.8605$ & $0.8596$ & $0.8606$\\
\midrule
128 & 0.1 & $0.8359$ & $0.8535$ & $0.8466$ & $0.8567$\\
128 & 1 & $0.8503$ & $0.8572$ & $0.8654$ & $0.8628$\\
128 & 10 & $0.8660$ & $0.8622$ & $0.8661$ & $0.8630$\\
128 & 100 & $0.8626$ & $0.8610$ & $0.8553$ & $0.8585$\\
\bottomrule
\end{tabular}
\end{table}

\section{Sensitivity to $D$ and $\eta$, separation estimates, and runtime}\label{app:sensitivity}
This appendix reports a supplementary run of the DRR arm alone on the 24
tabular benchmarks under the protocol of \S\ref{sec:proto}\ --- the same outer and inner splits,
encoders, feature-map normalization, tolerance rule, base learners, and marginal
standardization --- varying one ingredient at a time: the random-feature
dimension $D\in\{0,32,64,128,256,512\}$ at $\eta=0.05$; the tolerance fraction
$\eta\in\{0.01,0.05,0.2,0.5\}$ at $D=0$ and $D=128$; and the random-feature
draw (three seeds) at $D=128$. Every configuration yields one raw-score AP per
dataset, trial, and base learner; the tables report dataset means over $30$
trials and five learners, with 95\% percentile dataset-bootstrap intervals of
the paired mean difference from $9{,}999$ resamples. The reported configuration
reproduces the values of \S\ref{sec:results}\ exactly; the intervals use a fresh
resampling seed and can differ from those quoted there in the last digit. For
the five datasets without continuous variables the feature map has no
random-feature block, so their $D>0$ rows coincide with $D=0$.

Table~\ref{tab:sweepD} and Figure~\ref{fig:sensitivity} (left) show that the
gain over the base grows slowly with $D$, from $0.0285$ at $D=0$ to $0.0361$ at
$D=512$, and that DRR beats the base on 23 or 24 of the 24 datasets at every
$D$. Relative to the reported $D=128$, $D=64$ is within $0.002$, and $D=256$ and
$D=512$ are ahead by $0.002$ (intervals $[0.0003,0.0033]$ and
$[0.0001,0.0045]$), so $D=128$ is close to, but not at, the best setting on this
grid. The dual score alone accounts for most of this dependence, rising from
$0.412$ to $0.458$ AP, whereas the fused score moves by $0.008$.

The tolerance fraction matters little between $0.01$ and $0.2$ (fused AP within
$0.002$ of the reported $\eta=0.05$ at both resolutions) and degrades at
$\eta=0.5$, which places the tolerance halfway to the base discrepancy and
raises $\ESS(\what)/n_0$ from $0.12$ to $0.55$ at $D=128$: the fused gain falls
to $0.0239$ at $D=128$ and $0.0203$ at $D=0$ (Table~\ref{tab:sweepEta} and
Figure~\ref{fig:sensitivity}, right). At $D=0$ the tolerance reaches $\gamma_n$
in 12 of the 30 \texttt{yeast} trials, where the dual is zero
(\ref{app:repro}); no other dataset triggers this. Across three random-feature
draws at $D=128$ the panel mean AP is $0.455$, $0.457$, and $0.457$; the
per-dataset range across draws averages $0.005$ (maximum $0.022$, on
\texttt{arrhythmia}) for the fused score and $0.016$ (maximum $0.076$, on
\texttt{glass4}) for the dual score alone, so fusion also damps the
draw-to-draw variability of the dual.

\begin{table}[t]
\centering
\footnotesize
\setlength{\tabcolsep}{4pt}
\caption{Sensitivity of DRR to the random-feature dimension $D$ at $\eta=0.05$
on the 24 tabular benchmarks (base AP $0.422$ in every row): dataset means of
raw-score AP over 30 trials and five base learners, the paired gain over the
base with its 95\% dataset-bootstrap interval and win/loss count over datasets,
and the paired difference from the reported $D=128$.}
\label{tab:sweepD}
\begin{tabular}{@{}lccccc@{}}
\toprule
$D$ & AP dual & AP DRR & DRR$-$base [95\% CI] & W/L & vs.\ $D{=}128$ [95\% CI]\\
\midrule
0 & 0.412 & 0.450 & $+0.0285$ [$+0.0187$, $+0.0393$] & 23/1 & $-0.0054$ [$-0.0112$, $-0.0003$]\\
32 & 0.427 & 0.451 & $+0.0293$ [$+0.0197$, $+0.0401$] & 24/0 & $-0.0046$ [$-0.0091$, $-0.0010$]\\
64 & 0.441 & 0.454 & $+0.0323$ [$+0.0222$, $+0.0439$] & 23/1 & $-0.0016$ [$-0.0041$, $+0.0006$]\\
128 & 0.450 & 0.455 & $+0.0339$ [$+0.0232$, $+0.0460$] & 24/0 & --\\
256 & 0.455 & 0.457 & $+0.0357$ [$+0.0249$, $+0.0476$] & 24/0 & $+0.0018$ [$+0.0003$, $+0.0033$]\\
512 & 0.458 & 0.458 & $+0.0361$ [$+0.0251$, $+0.0485$] & 24/0 & $+0.0022$ [$+0.0001$, $+0.0045$]\\
\bottomrule
\end{tabular}
\end{table}

\begin{table}[t]
\centering
\footnotesize
\setlength{\tabcolsep}{4pt}
\caption{Sensitivity of DRR to the tolerance fraction $\eta$ of the rule
$\delta_n=\max\{\varphi_n+\eta(\gamma_n-\varphi_n),1.05\varphi_n\}$ at $D=0$
and $D=128$; columns as in Table~\ref{tab:sweepD}, with the last column
relative to the reported $(D,\eta)=(128,0.05)$.}
\label{tab:sweepEta}
\begin{tabular}{@{}lccccc@{}}
\toprule
$\eta$ & AP dual & AP DRR & DRR$-$base [95\% CI] & W/L & vs.\ $(128,0.05)$ [95\% CI]\\
\midrule
\multicolumn{6}{@{}l}{$D=0$}\\
0.01 & 0.406 & 0.449 & $+0.0273$ [$+0.0172$, $+0.0385$] & 22/2 & $-0.0066$ [$-0.0123$, $-0.0011$]\\
0.05 & 0.412 & 0.450 & $+0.0285$ [$+0.0187$, $+0.0393$] & 23/1 & $-0.0054$ [$-0.0112$, $-0.0003$]\\
0.2 & 0.406 & 0.448 & $+0.0266$ [$+0.0181$, $+0.0363$] & 23/1 & $-0.0073$ [$-0.0144$, $-0.0012$]\\
0.5 & 0.382 & 0.442 & $+0.0203$ [$+0.0133$, $+0.0278$] & 21/3 & $-0.0135$ [$-0.0232$, $-0.0052$]\\
\addlinespace
\multicolumn{6}{@{}l}{$D=128$}\\
0.01 & 0.440 & 0.454 & $+0.0323$ [$+0.0214$, $+0.0449$] & 23/1 & $-0.0015$ [$-0.0030$, $+0.0000$]\\
0.05 & 0.450 & 0.455 & $+0.0339$ [$+0.0232$, $+0.0460$] & 24/0 & --\\
0.2 & 0.446 & 0.454 & $+0.0322$ [$+0.0223$, $+0.0432$] & 23/1 & $-0.0017$ [$-0.0045$, $+0.0010$]\\
0.5 & 0.411 & 0.445 & $+0.0239$ [$+0.0158$, $+0.0327$] & 22/2 & $-0.0100$ [$-0.0173$, $-0.0034$]\\
\bottomrule
\end{tabular}
\end{table}

\begin{figure}[t]
\centering
\includegraphics[width=0.98\linewidth]{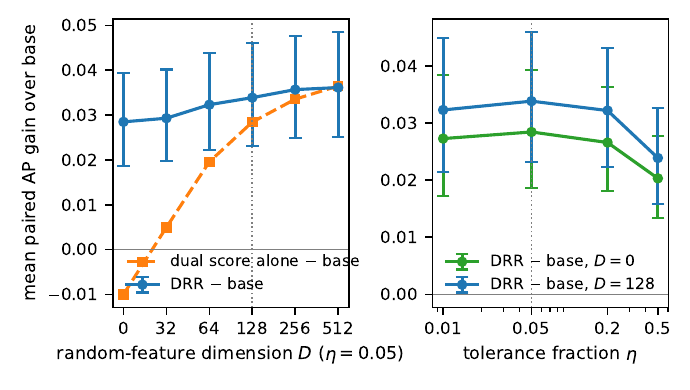}
\caption{Sensitivity of the mean paired raw-score AP gain over the base on the
24 tabular benchmarks. Left: random-feature dimension $D$ at $\eta=0.05$, for
DRR and for the dual score alone. Right: tolerance fraction $\eta$ at $D=0$ and
$D=128$. Bars are 95\% percentile dataset-bootstrap intervals; the dotted lines
mark the reported $D=128$ and $\eta=0.05$.}
\label{fig:sensitivity}
\end{figure}

Table~\ref{tab:separation} estimates the quantities of  Theorem~\ref{thm:fusion}\ on the test
half at the reported configuration: the pooled within-class deflections $d_1$
and $d_2$ of the standardized base and dual scores, their pooled within-class
correlation $\rho$, and the implied change $d'(\tfrac12)-d_1$ of the fused
deflection, averaged over trials and base learners. The implied change is
evaluated at the within-class weight $\tfrac12$, the $\pi_1\to0$ limit of the
effective weight $w_{\mathrm{eff}}$ of  Remark~\ref{rem:w}; evaluating it at
$w_{\mathrm{eff}}$ instead (mean $0.55$ over the $120$ cells; above $0.81$ in
only five cells, all on \texttt{vowel0} or \texttt{isolet}, with a maximum of
$1.29$) leaves every aggregate below unchanged (sign agreement $80\%$, 86 cells with a
predicted gain, Spearman $+0.73$). The within-class
correlation is moderate on every dataset (median $0.51$, interquartile range
$0.41$--$0.61$ over the $120$ dataset--learner cells), and the dual's deflection
is comparable to the base's (median $d_2/d_1=0.94$), above the fixed-weight
threshold $\sqrt{5+4\rho}-2$ ($0.52$--$0.84$) on most datasets. In 86 of the 120
cells the ratio exceeds the threshold, so the theorem predicts a gain; the
observed raw-score AP gain is positive in 108 cells, and the sign of the implied
$\Delta d'$ agrees with that of the observed gain in 80\% of cells (78\% when
the moments are estimated on the threshold subset instead of the test half).
Across cells the implied $\Delta d'$ and the observed AP gain have Spearman
correlation $+0.73$ (Figure~\ref{fig:separation}; $+0.61$ with
threshold-subset moments and $+0.40$ over the 24 dataset means). The
disagreements are informative: on \texttt{vowel0}, \texttt{isolet},
\texttt{led}, and \texttt{car-good} the base already separates the classes
strongly ($d_1$ between $3.9$ and $7.1$) and the two-moment summary predicts a
loss, yet AP improves (by $0.105$ on \texttt{isolet}). A plausible
explanation, which we have not tested, is that the gain there sits in the upper
tail of the score, which AP weights heavily and an equal-variance bi-normal
summary does not capture.

\begin{figure}[t]
\centering
\includegraphics[width=0.75\linewidth]{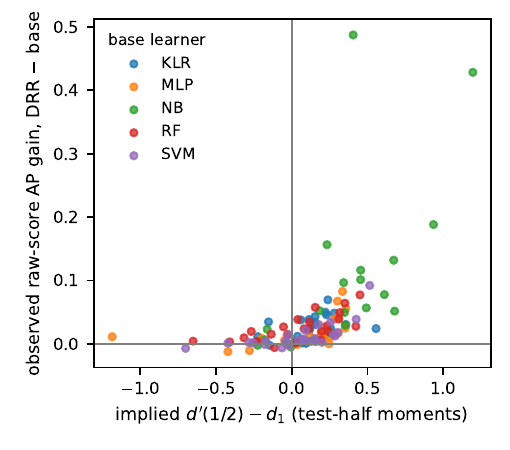}
\caption{Implied change in the fused deflection, $d'(\tfrac12)-d_1$, computed
from test-half within-class moments, against the observed raw-score AP gain of
DRR$(D{=}128)$ over the base, one point per dataset and base learner ($120$
points; Spearman $+0.73$).}
\label{fig:separation}
\end{figure}

\begin{table}[t]
\centering
\normalsize
\setlength{\tabcolsep}{3pt}
\caption{Per-dataset separation estimates for DRR$(D{=}128)$ on the test half,
averaged over 30 trials and the five base learners: within-class deflections
$d_1$ (base) and $d_2$ (dual score), within-class correlation $\rho$, the ratio
$d_2/d_1$ of the averaged deflections against the fixed-weight threshold
$\sqrt{5+4\rho}-2$, the implied change $\Delta d'=d'(\tfrac12)-d_1$, and the
observed raw-score AP gain of DRR over the base. The threshold and $\Delta d'$
columns average the per-learner values. Datasets are ordered by decreasing
$D=128$ feasibility floor, as in Table~\ref{tab:perdataset}.}
\label{tab:separation}
\begin{tabular}{@{}lrrrrrrr@{}}
\toprule
dataset & $d_1$ & $d_2$ & $\rho$ & $d_2/d_1$ & $\sqrt{5+4\rho}-2$ & $\Delta d'$ & $\Delta$AP\\
\midrule
\texttt{yeast} & 4.01 & 4.59 & 0.77 & 1.14 & 0.84 & $+0.41$ & $+0.024$\\
\texttt{glass4} & 3.41 & 3.17 & 0.55 & 0.93 & 0.68 & $+0.33$ & $+0.071$\\
\texttt{spectrometer} & 4.52 & 4.00 & 0.58 & 0.88 & 0.70 & $+0.30$ & $+0.111$\\
\texttt{libras\_move} & 4.28 & 3.19 & 0.46 & 0.75 & 0.61 & $+0.21$ & $+0.053$\\
\texttt{vowel0} & 7.14 & 3.59 & 0.52 & 0.50 & 0.66 & $-0.33$ & $+0.027$\\
\texttt{arrhythmia} & 0.90 & 0.87 & 0.35 & 0.97 & 0.52 & $+0.16$ & $+0.036$\\
\texttt{oil} & 2.57 & 2.33 & 0.56 & 0.91 & 0.69 & $+0.20$ & $+0.050$\\
\texttt{winequality} & 0.97 & 1.28 & 0.52 & 1.33 & 0.65 & $+0.23$ & $+0.014$\\
\texttt{hypo} & 5.57 & 4.10 & 0.57 & 0.74 & 0.70 & $+0.08$ & $+0.025$\\
\texttt{led} & 3.91 & 2.28 & 0.52 & 0.58 & 0.66 & $-0.10$ & $+0.011$\\
\texttt{mammography} & 5.56 & 3.86 & 0.53 & 0.69 & 0.67 & $+0.07$ & $+0.035$\\
\texttt{flare-f} & 1.21 & 1.69 & 0.52 & 1.39 & 0.66 & $+0.32$ & $+0.035$\\
\texttt{us\_crime} & 2.18 & 2.06 & 0.65 & 0.95 & 0.76 & $+0.15$ & $+0.056$\\
\texttt{isolet} & 5.46 & 2.49 & 0.43 & 0.46 & 0.59 & $-0.23$ & $+0.105$\\
\texttt{abalone} & 2.16 & 2.02 & 0.48 & 0.94 & 0.63 & $+0.25$ & $+0.040$\\
\texttt{scene} & 0.92 & 0.87 & 0.48 & 0.94 & 0.63 & $+0.11$ & $+0.021$\\
\texttt{yeast\_ml8} & 0.20 & 0.15 & 0.43 & 0.72 & 0.59 & $+0.01$ & $+0.000$\\
\texttt{boundary} & 0.70 & 0.94 & 0.46 & 1.34 & 0.61 & $+0.19$ & $+0.013$\\
\texttt{car-good} & 4.27 & 1.87 & 0.51 & 0.44 & 0.65 & $-0.28$ & $+0.001$\\
\texttt{phoss} & 0.95 & 0.80 & 0.51 & 0.84 & 0.65 & $+0.07$ & $+0.010$\\
\texttt{abalone\_19} & 0.53 & 0.80 & 0.48 & 1.53 & 0.62 & $+0.17$ & $+0.003$\\
\texttt{compustat} & 1.47 & 1.21 & 0.40 & 0.82 & 0.57 & $+0.14$ & $+0.012$\\
\texttt{sick} & 0.92 & 1.30 & 0.58 & 1.41 & 0.70 & $+0.23$ & $+0.032$\\
\texttt{cam} & 1.34 & 1.29 & 0.50 & 0.97 & 0.64 & $+0.16$ & $+0.029$\\
\bottomrule
\end{tabular}
\end{table}

On the laptop used for this run (an eight-core mobile processor with one
single-threaded worker per core), the dual solve --- feasibility floor plus
tilt --- took a median of $0.33$ s per fit subset at $D=0$ (all $720$ fit
subsets) and, over the $570$ fit subsets whose feature map carries a
random-feature block (the 19 datasets with at least one continuous variable),
$0.50$ s at $D=128$ and $0.70$ s at $D=512$, with maxima
of $191$ s (\texttt{phoss}, $D=0$; an all-categorical design whose map is the
same at every $D$), $55$ s (\texttt{isolet}, $D=128$), and $77$ s
(\texttt{compustat}, $D=512$). The dual solver --- damped Newton up to $600$
feature dimensions and L-BFGS-B above it, the path taken by \texttt{isolet}
and \texttt{phoss} at every $D$ and by six further datasets at $D=512$ ---
converged in every reported-configuration solve (median
$12$ iterations, maximum $83$) and in all but one of the $10{,}080$ solves
overall (an L-BFGS-B solve on \texttt{isolet} at $D=128$, $\eta=0.5$). For
comparison, fitting the base learners took medians of $0.05$ s (SVM), $0.5$ s
(random forest), and $1.0$ s (MLP) per fit subset, with maxima of $130$ s,
$41$ s, and $22$ s.

\end{document}